\def\BState{\State\hskip-\ALG@thistlm}
\def\eqref#1{equation~\ref{#1}}
\def\1{\bm{1}}
\def\rmH{{\mathbf{H}}}
\def\rmI{{\mathbf{I}}}
\DeclareMathAlphabet{\mathsfit}{\encodingdefault}{\sfdefault}{m}{sl}
\SetMathAlphabet{\mathsfit}{bold}{\encodingdefault}{\sfdefault}{bx}{n}
\newcommand{\R}{\mathbb{R}}
\newcommand{\bw}{\boldsymbol{W}}
\newcommand{\by}{\mathbf{y}}
\newcommand{\Hessian}{\mathbf{H}}
\newcommand{\grad}{\mathbf{g}}
\newcommand{\Cov}{\boldsymbol{\Pi}}
\newcommand{\m}{M}
\newcommand{\nblock}{o}
\newcommand{\Nblock}{O}
\newcommand{\dimBr}{\aleph}
\newcommand{\bx}{\mathbf{x}}
\newcommand{\bD}{\mathbf{D}}
\newcommand{\bN}{\mathcal{N}}
\newcommand{\btheta}{\boldsymbol{}}
\newcommand{\bW}{\mathbf{W}}
\newcommand{\define}{\triangleq}
\newcommand{\diag}{\operatorname{diag}}
\newcommand{\mean}{\mathbf{m}_{jk}^{o'}}
\newcommand{\variance}{C_{jk}^{o'}}
\newcommand{\given}{\,|\,}
\newcommand{\DistGam}{\text{Gam}}
\newlength{\bracewidth}
\newcommand{\myunderbrace}[2]{\settowidth{\bracewidth}{$#1$}#1\hspace*{-1\bracewidth}\smash{\underbrace{\makebox{\phantom{$#1$}}}_{#2}}}
\newcommand{\quadratic}{{\frac{1}{2}(\bw-\bw^*)^{\top} \Hessian(\bw^* \btheta) (\bw-\bw^*)+  (\bw-\bw^*)^{\top}\grad(\bw^* \btheta)}}
\DeclareMathOperator*{\argmin}{arg\,min}
\newtheorem{proposition}{Proposition}
\newtheorem{remark}{Remark}
\newtheorem{corollary}{Corollary}
\newtheorem*{proof}{Proof}
\newcommand{\SKIP}[1]{}
\icmltitlerunning{BayesNAS}
\begin{document}
\twocolumn[
\icmltitle{BayesNAS: A Bayesian Approach for Neural Architecture Search}



\icmlsetsymbol{equal}{*}

\begin{icmlauthorlist}
\icmlauthor{Hongpeng Zhou}{delft,equal}
\icmlauthor{Minghao Yang}{delft,equal}
\icmlauthor{Jun Wang}{ucl}
\icmlauthor{Wei Pan}{delft}
\end{icmlauthorlist}

\icmlaffiliation{delft}{Department of Cognitive Robotics, Delft University of Technology, Netherlands}
\icmlaffiliation{ucl}{Department of Computer Science, University College London, UK}
\icmlcorrespondingauthor{Wei Pan}{wei.pan@tudelft.nl}

\icmlkeywords{Machine Learning, ICML}
\vskip 0.3in
]

\printAffiliationsAndNotice{\hskip 0.2in \icmlEqualContribution} 

\begin{abstract}
One-Shot Neural Architecture Search (NAS) is a promising method to significantly reduce search time without any separate training. It can be treated as a Network Compression problem on the architecture parameters from an over-parameterized network. However, there are two issues associated with most one-shot NAS methods. First, dependencies between a node and its predecessors and successors are often disregarded which result in improper treatment over \emph{zero} operations. Second, architecture parameters pruning based on their magnitude is questionable. In this paper, we employ the classic Bayesian learning approach to alleviate these two issues by modeling architecture parameters using \emph{hierarchical automatic relevance determination} (HARD) priors. Unlike other NAS methods, we train the over-parameterized network for only \emph{one} epoch then update the architecture. Impressively, this enabled us to find the architecture on CIFAR-10 within only $0.2$ GPU days using a single GPU. Competitive performance can be also achieved by transferring to ImageNet. As a byproduct, our approach can be applied directly to compress convolutional neural networks by enforcing structural sparsity which achieves extremely sparse networks without accuracy deterioration.
\end{abstract}

\section{Introduction}

Neural Architecture Search (NAS), the process of automating architecture engineering, is thus a logical next step in automating machine learning since \citep{nas}. There are basically three existing frameworks for neural architecture search. Reinforcement learning based NAS \citep{baker2016designing,nas,zhong2018practical, zoph2017learning, cai2018path} methods take the generation of a neural architecture as an agent's action with the action space identical to the search space. More recent neuro-evolutionary approaches \citep{real2017large,liu2017hierarchical,real2018regularized,miikkulainen2019evolving, xie2017genetic, elsken2018efficient} use gradient-based methods for optimizing weights and solely use evolutionary algorithms for optimizing the neural architecture itself. However, these two frameworks take enormous computational power when compared to a search using a single GPU. One-Shot based NAS is a promising approach to significantly reduce search time without any separate training, which treats all architectures as different subgraphs of a supergraph (the one-shot model) and shares weights between architectures that have edges of this super-graph in common \citep{saxena2016convolutional, brock2017smash,pham2018efficient, bender2018understanding,liu2018darts, cai2019iclr,xie2019snas,zhang2018graph,zhang2018single}. A comprehensive survey on Neural Architecture Search can be found in \citep{survey}.

Our approach is a one-shot based NAS solution which treats NAS as a Network Compression/pruning problem on the architecture parameters from an over-parameterized network. However, despite it's remarkable less searching time compared to reinforcement learning and neuro-evolutionary approaches, we can identify a number of significant and practical disadvantages of the current one-shot based NAS. First, dependencies between a node and its predecessors and successors are disregarded in the process of identifying the redundant connections. This is mainly motivated by the improper treatment of \emph{zero} operations. On one hand, the logit of \emph{zero} may dominate some of the edges while the child network still has other non-zero edges to keep it connected \citep{liu2018darts,xie2019snas,cai2019iclr,zhang2018single}, for example, node 2 in Figure\ref{fig:1}a. Similarly, as shown in Figure 1 of \citep{xie2019snas}, the probability of invalid/disconnected graph sampled will be $\frac{511}{1024}$ when there are three non-zero plus one \emph{zero} operation. Though post-processing to safely remove isolated nodes is possible, \textit{e.g.}, for chain-like structure, it demands extensive extra computations to reconstruct the graph for complex search space with additional layer types and multiple branches and skip connections. This may prevent the use of modern network structure as the backbone such as DenseNet \citep{huang2017densely}, newly designed motifs \citep{liu2017hierarchical} and complex computer vision tasks such as semantic segmentation \citep{liu2019auto}. On the other hand, \emph{zero} operations should have higher priority to rule out other possible operations, since \emph{zero} operations equal to all \emph{non-zero} operations not being selected. Second, most one-shot NAS methods \citep{liu2018darts, cai2019iclr,xie2019snas,zhang2018single,gordon2018morphnet} rely on the magnitude of architecture parameters to prune redundant parts and this is not necessarily true. From the perspective of Network Compression \citep{lee2018snip}, magnitude-based metric depends on the scale of weights thus requiring pre-training and is very sensitive to the architectural choices. Also the magnitude does not necessarily imply the optimal edge. Unfortunately, these drawbacks exist not only in Network Compression but also in one-shot NAS.

In this work, we propose a novel, efficient and highly automated framework based on the classic Bayesian learning approach to alleviate these two issues simultaneously. We model architecture parameters by a \emph{hierarchical automatic relevance determination} (HARD) prior. The dependency can be translated by multiplication and addition of some independent Gaussian distributions. The classic Bayesian learning framework \cite{mackay1992bayesian, neal1995bayesian, tipping2001sparse} prevents overfitting and promotes sparsity by specifying sparse priors. The uncertainty of the parameter distribution can be used as a new metric to prune the redundant parts if its associated entropy $\frac{1}{2}\ln(2\pi e {\gamma_{jk}^{o'}})$ is nonpositive. The majority of parameters are automatically zeroed out during the learning process.

\paragraph{Our Contributions}
\begin{itemize}
    \item \textbf{Bayesian approach:} 
    BayesNAS is the first Bayesian approach for one-shot NAS.
    Therefore, our approach shares the advantages of Bayesian learning, which prevents overfitting and does not require tuning a lot of hyperparameters. Hierarchical sparse priors are used to model the architecture parameters. 
    Priors can not only promote sparsity, but model the dependency between a node and its predecessors and successors ensuring a connected derived graph after pruning. Furthermore, it provides a principled way to prioritize \emph{zero} operations over other \emph{non-zero} operations. In our experiment on CIFAR-10, we found that the variance of the prior, as well as that of posterior, is several magnitudes smaller than posterior mean which renders a good metric for architecture parameters pruning.
    \item \textbf{Simple and fast search:} 
    Our algorithm is formulated simply as an iteratively re-weighted $\ell_1$ type algorithm \citep{candes2008enhancing} where the re-weighting coefficients used for the next iteration are computed not only from the value of the current solution but also from its posterior variance. The update of posterior variance is based on Laplace approximation in Bayesian learning which requires computation of the inverse Hessian of log likelihood. To make the computation for large networks feasible, a fast Hessian calculation method is proposed. In our experiment, we train the model for only \emph{one} epoch before calculating the Hessian to update the posterior variance. Therefore, the search time for very deep neural networks can be kept within $0.2$ GPU days.
    \item \textbf{Network compression:} As a byproduct, our approach can be extended directly to Network Compression by enforcing various structural sparsity over network parameters. Extremely sparse models can be obtained at the cost of minimal or no loss in accuracy across all tested architectures. This can be effortlessly integrated into BayesNAS to find sparse architecture along with sparse kernels for resource-limited hardware.
\end{itemize}
\begin{figure*}
    \centering 
    \includegraphics[scale = 0.22]{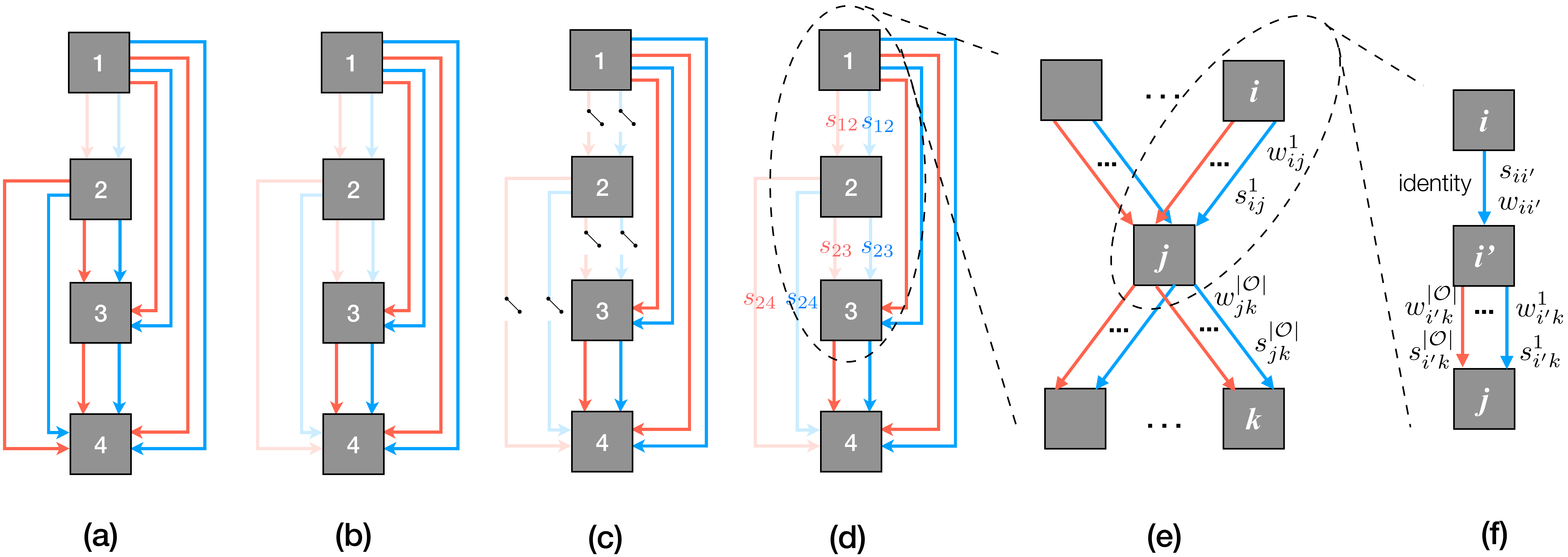}
    \vspace{-0.3cm}
    \caption{An illustration of BayesNAS:
    (a) disconnected graph with isolated node 2 caused by disregard for dependency; 
    (b) expected connected graph with no connection from node 2 to 3 and from node 2 to 4;
    (c) illustration about dependency with predecessor's ($e_{12}$) superior control over its successors ($e_{23}$ and $e_{24} $)
    (d) designed switches realizing the dependency and determining "on or off" of the edge;
    (e) elementary multi-input-multi-output motif for a graph;
    (f) prioritized \emph{zero} operation over other \emph{non-zero} operations.
    }
    \vspace{-0.5cm}
    \label{fig:1}
\end{figure*}
\vspace{-0.3cm}
\section{Related Work}
\textbf{Network Compression.} The de facto standard criteria to prune redundant weights depends on their magnitude and is designed to be incorporated with the learning process. These methods are prohibitively slow as they require many iterations of pruning and learning steps. One category is based on the magnitude of weights. The conventional approach to achieve sparsity is by enforcing penalty terms \citep{chauvin1989back, weigend1991generalization, ishikawa1996structural}. Weights below a certain threshold could be removed. In recent years, impressive results have been achieved using the magnitude of weight as the criterion \citep{han2015deep} as well as other variations \citep{guo2016dynamic}. The other category is based on the magnitude of Hessian of loss with respect to weights, \textit{i.e.}, higher the value of Hessian, greater the importance of the parameters \citep{lecun1990optimal,hassibi1993optimal}. Despite being popular, both of these categories require pretraining and are very sensitive to architectural choices. For instance, different normalization layers affect the magnitude of weights in different ways. This issue has been elaborated in \citep{lee2018snip} where the gradient information at the beginning of training is utilized for ranking the relative importance of weights' contribution to the training loss. 

\textbf{One-shot Neural Architecture Search.}
In one-shot NAS, redundant architecture parameters are pruned based on the magnitude of weights similar to that used in Network Compression. In DARTS, \citet{liu2018darts} applied a softmax function to the magnitude of $w$ to rank the relative importance for each operation. Similar to DARTS, there are two related works: ProxylessNAS~\citep{cai2019iclr} and SNAS~\citep{xie2019snas}. ProxylessNAS binarizes $w$ using $\texttt{clip}({(1+w)}/{2},0,1)$ \citep{courbariaux2015binaryconnect} where $-1$ plays the role of threshold and edge with the highest weight will be selected in the end. While SNAS applies a softened one-hot random variable to rank the architecture parameter, \citet{gordon2018morphnet} treats the scaling factor of Batch Normalization as an edge and normalization as its associated operation. \citet{zhang2018single} proposed DSO-NAS which relaxes $\ell_0$ norm by replacing it with $\ell_1$ norm and prunes the edges by a threshold, \textit{e.g.}, the learning rate is multiplied by a predefined regularization parameter to prune edges gradually over the course of training.

\textbf{Bayesian Learning and Compression.} 
Our approach is based on Bayesian learning. In principle, the Bayesian approach to learn neural networks does not have problems of tuning a large amount of hyperparameters or overfitting the training data \citep{mackay1992practical,mackay1992bayesian,neal1995bayesian,hernandez2015probabilistic}. Recently, Bayesian learning is applied to estimate layer size and network depth in NAS problem \cite{dikov2019bayesian}. By employing sparsity-inducing priors, the obtained model depends only on a subset of kernel functions for linear models \citep{tipping2001sparse} and deep neural networks where the neurons can be pruned as well as all their ingoing and outgoing weights \citep{louizos2017bayesian}. Other Bayesian methods have also been applied to network pruning \citep{ullrich2017soft, molchanov2017variational} where the former extends the soft weight-sharing to obtain a sparse and compressed network and the latter uses variational inference to learn the dropout rate that can then be used for network pruning.

\section{Search Space Design}
The search space defines which neural architectures a NAS approach might discover in principle. Designing a good search space is a challenging problem for NAS. Some works \citep{nas, zoph2017learning, pham2018efficient, cai2018path, zhang2018single, liu2018darts, cai2019iclr} have proposed that the search space could be represented by a Directed Acyclic Graph (DAG). We denote $e_{ij}$ as the edge from node $i$ to node $j$ and $o_{ij}$ stands for the operation that is associated with edge $e_{ij}$.

Similar to other one-shot based NAS approaches  \citep{bender2018understanding,zhang2018single, liu2018darts, cai2019iclr,gordon2018morphnet}, we also include (different or same) scaling scalars over all operations of all edges to control the information flow, denoted as $w_{ij}^o$ which also represent architecture parameters. The output of a mixed operation $o_{ij}, i<j$ is defined based on the outputs of its edge
\begin{align}\label{eq:output}
  o_{j}(z_i) = \sum_{o\in  \mathcal{O}} w_{ij}^o o_{ij}(z_i).
\end{align}
Then $z_j$ can be obtained as $\sum_{i<j}o_{j}(z_i)$. 

To this end, the objective is to learn a simple/sparse subgraph while maintaining/improving the accuracy of the over-parameterized DAG \citep{bender2018understanding}.
Let us formulate the search problem as an optimization problem. Given a dataset $\bD =(\mathbf{X}, \mathbf{Y}) =  \{(\bx_n, \by_n)\}_{n=1}^N$ and the desired sparsity level $\kappa$ (\textit{i.e.}, the number of non-zero edges), one-shot NAS problem can be written as an optimization problem with the following constraints:

\vspace{-0.7cm}
\begin{equation}
\begin{aligned}
\min_{\bw} L(\bW; \bD) &= \min_{\bw} \frac{1}{N} \sum_{n=1}^N \ell(\by_n, \text{Net}(\bx_n, \boldsymbol{\mathcal{W}}, \mathbf{w}))\
\\  
\text{s.t.}\quad\bw &\in \R^{m^{\text{net}}+m^{\text{edge}}},\quad \|\mathbf{w}\|_0 \le \kappa^{\text{edge}}
\label{eq:problem}
\end{aligned}
\end{equation}
where $\bW$ are split into two parts: network parameters $\boldsymbol{\mathcal{W}} = [\mathcal{W}_{ij}^o]$ and architecture parameters $\mathbf{w} = [w_{ij}^o]$ with dimension of $m^{\text{net}}$ and $m^{\text{edge}}$ respectively, and $\|\cdot\|_0$ is the standard $\ell_0$ norm. The formulation in \eqref{eq:problem} can be substantiated by incorporating \emph{zero} operations into $\mathcal{O}$ to allow removal of $w_{ij}^o$ \citep{liu2018darts, cai2019iclr} aiming to further reduce the size of cells and improve the design flexibility.

To alleviate the negative effect induced by the dependency and magnitude-based metric whose issues have been discussed in Introduction, for each $w_{ij}^o$, we introduce a switch $s_{ij}^o$ that is analogous to the one used in an electric circuit. There are four features associated with these switches. First, the ``on-off'' status is not solely determined by its magnitude. Second, dependency will be taken into account, \textit{i.e.}, the predecessor has superior control over its successors as illustrated in Figure~\ref{fig:1}c. Third, $s_{ij}^o$ is an auxiliary variable that will not be updated by gradient descent but computed directly to switch on or off the edge. Lastly, $s_{ij}^o$ should work for both proxy and proxyless scenarios and can be better embedded into existing algorithmic frameworks \cite{liu2018darts, cai2019iclr,gordon2018morphnet}. The calculation method will be introduced later in Section~\ref{sec:performance}.

Inspired by the hierarchical representation in a DAG \citep{liu2018darts,liu2017hierarchical}, we abstract a single motif as the building block of DAG, as shown in Figure~\ref{fig:1}e. Apparently, any derived motif, path, or network can be constructed by such a multi-input-multi-output motif. It shows that a successor can have multiple predecessors and each predecessor can have multiple operations over each of its successors. Since the representation is general, each directed edge can be associated with some primitive operations (\textit{e.g.}, convolution, pooling, etc.) and a node can represent output of motifs, cells, or a network.

\section{Dependency Based One-Shot Performance Estimation Strategy}
\label{sec:performance}
\subsection{Encoding the Dependency Logic}
In the following, we will formally state the criterion to identify the redundant connections in Proposition~\ref{proposition:1}. The idea can be illustrated by Figure~\ref{fig:1}b in which both the blue and red edges from node 2 to 3 and from node 2 to 4 might be non-zeros but should be removed as a consequence. To enable this, we have the following proposition.
\begin{proposition}\label{proposition:1}
There is information flow from node $j$ to $k$ under operation $o'$ as shown in Figure~\ref{fig:1}e if and only if at least one operation of at least one predecessor of node $j$ is non-zero and $w_{jk}^{o'}$ is also non-zero.
\end{proposition}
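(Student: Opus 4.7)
The plan is to treat ``information flow from $j$ to $k$ under operation $o'$'' as the presence of a structurally non-zero term of the form $w_{jk}^{o'}\,o_{jk}(z_j)$ in the expansion of the successor's activation, so that the proof reduces to unfolding the definition in \eqref{eq:output} twice (once at edge $(j,k)$ and once at the incoming edges of $j$) and tracking which factors can vanish. I would first fix notation by writing the contribution of edge $(j,k)$ under operation $o'$ explicitly as $c_{jk}^{o'} \define w_{jk}^{o'}\,o_{jk}(z_j)$, and agree that each primitive operation $o_{ij}$ is assumed to satisfy $o_{ij}(0)=0$ (true for the standard NAS primitives: convolutions, poolings, and identities are linear in their input up to a bias that we absorb elsewhere, and the \emph{zero} operation is identically zero). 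Under this convention, information flow along $(j,k)$ via $o'$ simply means $c_{jk}^{o'}\not\equiv 0$ as a symbolic function of the parent activations, \textit{i.e.} at least one assignment to the predecessors' parameters produces a non-zero value.

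For the ($\Rightarrow$) direction I would argue contrapositively. Suppose either $w_{jk}^{o'}=0$, or every predecessor operation of $j$ is \emph{zero}, \textit{i.e.} $w_{ij}^{o}=0$ or $o_{ij}$ is the zero map for all $i<j$ and all $o\in\mathcal{O}$. In the first case, $c_{jk}^{o'}=0\cdot o_{jk}(z_j)=0$ trivially. In the second case, \eqref{eq:output} gives $o_j(z_i)=\sum_o w_{ij}^o o_{ij}(z_i)=0$ for every $i<j$, so $z_j=\sum_{i<j} o_j(z_i)=0$. Using $o_{jk}(0)=0$, we conclude $c_{jk}^{o'}=w_{jk}^{o'}\cdot 0=0$, meaning no information flows; this is the desired contrapositive.

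For the ($\Leftarrow$) direction I would proceed constructively. Assume $w_{jk}^{o'}\neq 0$ and that some predecessor $i^\star<j$ admits some operation $o^\star$ with $w_{i^\star j}^{o^\star}\neq 0$ and $o_{i^\star j}$ not identically zero. Then $o_{i^\star j}(z_{i^\star})$ contributes a non-trivial summand to $o_j(z_{i^\star})$, and since distinct predecessors contribute independently scaled features that are not generically forced to cancel, this summand survives into $z_j$. Substituting into the edge $(j,k)$ expansion gives $c_{jk}^{o'}=w_{jk}^{o'}\,o_{jk}(z_j)\not\equiv 0$, establishing the presence of information flow through node $j$ under $o'$. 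Iterating this reasoning along the multi-input--multi-output motif of Figure~\ref{fig:1}e yields the statement for arbitrary DAG positions of $j$.

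The main obstacle is the ($\Leftarrow$) direction, because in principle numerically tuned contributions from several predecessors could cancel and make $z_j$ accidentally zero even when the conditions are met. I would address this by explicitly framing the claim at the level of the computation graph's topology (generic non-cancellation of independently parameterized summands) rather than at the level of particular numerical instantiations, which is exactly the level at which Proposition~\ref{proposition:1} is used later to decide whether a switch $s_{ij}^o$ can safely be turned off. The remaining steps---unfolding \eqref{eq:output}, invoking $o_{ij}(0)=0$, and the bookkeeping of summands---are routine and do not require further technical machinery.
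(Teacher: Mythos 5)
The paper never actually proves Proposition~\ref{proposition:1}: it is stated as the self-evident one-hop connectivity property that the whole section is built around, immediately restated in contrapositive form in Remark~\ref{remark:1}, and then \emph{encoded} (not derived) via the switch logic in \eqref{eq:gamma_logic} and the HARD prior in \eqref{eq:ppath}. So your proposal is not so much an alternative route as the only explicit derivation on the table, and in spirit it matches what the paper intends: the edge $(j,k)$ under $o'$ is functional iff $w_{jk}^{o'}\neq 0$ and node $j$ actually receives something from at least one predecessor, exactly the union-intersection logic of \eqref{eq:gamma_logic}.

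Two caveats about your formalization, both of which you partially flag yourself. First, the convention $o_{ij}(0)=0$ is not literally true for the paper's primitives: convolutions are applied as BN-ReLU-Conv, and the affine part of batch normalization (and any bias) makes $o_{ij}(0)\neq 0$ in general. The clean fix is to define ``information flow'' as dependence of the edge's output on the upstream activations (equivalently, non-constancy as a function of the network input), rather than as literal non-vanishing; then the forward direction reads ``if all $w_{ij}^{o}=0$ then $z_j$ is a constant, so $w_{jk}^{o'}o_{jk}(z_j)$ carries no information,'' with no assumption on $o_{jk}(0)$ needed. Second, your ($\Leftarrow$) direction is sound only under the generic/structural reading you adopt at the end (non-cancellation of independently parameterized summands); for particular numerical instantiations the claim is simply false, since contributions to $z_j$ can cancel. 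Once you commit to the structural reading, though, the proposition becomes the near-tautology the paper treats it as, and your unfolding of \eqref{eq:output} is a perfectly adequate justification — just state up front that the equivalence is at the level of the computation graph, which is also the level at which the switches $s_{ij}^o$ and the pruning criterion later use it. The closing remark about ``iterating along the motif'' is unnecessary: the proposition is deliberately local to node $j$'s immediate predecessors, and no recursion over the DAG is required.
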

\begin{remark}
\label{remark:1}
The same expression for Proposition~\ref{proposition:1} is: 
there is \textbf{no} information flow from node $j$ to $k$ under operation $o'$ if and only if all the operation of all the predecessors of node $j$ are zeros or ${w_{jk}^{o'}}$ is zero. This explains the incompleteness of the problem \ref{eq:problem} as well as the possible phenomenon that non-zero edges become dysfunctional in Figure~\ref{fig:1}b. 
\end{remark}
\begin{remark}\label{remark:2}
The expression to encode Proposition~\ref{proposition:1} is not unique. Some examples include but not limited to, \textit{e.g.}, ${w_{jk}^{o'}\sum_{i<j} |w_{ij}^{o}|}$, $w_{jk}^{o'}\sum_{i<j} \alpha_{ij}^{o} |w_{ij}^{o}|,\forall \alpha_{ij}^{o}\in (0,1] $, $w_{jk}^{o'}\sum_{i<j}(w_{ij}^{o})^2 $. Apparently, $\ell_0$ norm of these quantities are difficult to be included in a constraint in the optimization problem formulation in \ref{eq:problem}.
\end{remark}
As can be seen in Remark~\ref{remark:2}, we will construct a probability distribution jointly over $w_{jk}^{o'}$, $w_{ij}^{o}$, $\forall i<j$ in the sequel, denoted as 
\begin{align}
\label{eq:c}
p(c(w_{jk}^{o'}, w_{ij}^{o} )), \forall i<j.
\end{align}
where $c$ is a possible expression like in Remark~\ref{remark:2} to encode Proposition~\ref{proposition:1}.

In the following, we will show how the ``switches'' $s$ can be used to implement Proposition~\ref{proposition:1}. If we assume $s$ has two states $\{\text{ON},\text{OFF}\}$, $w_{jk}^{o'}$ is redundant when $s_{jk}^{o'}$ is $\text{OFF}$ \emph{or} all $s_{ij}^{o}$ are $\text{OFF}$, $\forall i<j,o \in \mathcal{O}$. How to use $s$ to encode the redundancy of  $w_{jk}^{o'}$, \textit{i.e.}, ${w_{jk}^{o'}\sum_{i<j}|w_{ij}^{o}| = 0}$? One possible solution is

\vspace{-0.6cm}
{\small
\begin{align}
\label{eq:gamma_logic}
\bigcup_{i<j}\bigcup_{o \in \mathcal{O}} s_{ij}^{o} \cap s_{jk}^{o'} \ \ \ \ \ \text{or} \ \ \ \ \  \overline{\overline{\bigcup_{i<j}\bigcup_{o \in \mathcal{O}} s_{ij}^{o}} \cup \overline{s_{jk}^{o'}}}
\end{align}
}If $s$ is a continuous variable with $s = \infty$ for $\text{ON}$ and $0$ for $\text{OFF}$, set union and intersection can be arithmetically represented by addition and multiplication respectively. $s$ does not directly determine the magnitude of $w$ but plays the role as uncertainty or confidence for zero magnitude. 

A straightforward way to encode this logic is to assign a probability distribution, for example Gaussian distribution, over $w_{jk}^{o'}$

\vspace{-0.6cm}
{\small
\begin{align}
p(w_{jk}^{o'}) = \bN(w_{jk}^{o'}|0, s_{jk}^{o'}), \ \ \sum_{i<j}p(w_{ij}^{o}) =  \sum_{i<j}\bN(w_{ij}^{o}|0, s_{ij}^{o})
\nonumber
\end{align}
}

\vspace{-0.5cm}
Since $w_{ij}^{o}, \forall i,j, o$ are independent with each other, we construct the following distribution to express~\eqref{eq:c}:

\vspace{-0.6cm}
{\small
\begin{equation}
\begin{aligned}
\label{eq:ppath}
p(c(w_{jk}^{o'}, w_{ij}^{o} ))&\define \bN(w_{jk}^{o'}|0, s_{jk}^{o'} )\sum_{i<j}\bN(w_{ij}^{o}|0, s_{ij}^{o})  \\
&= \bN(w_{jk}^{o'}|0, s_{jk}^{o'} )\bN\left(\sum_{i<j} \frac{s_{ij}^{o}w_{ij}^{o}}{\sum_{i<j} s_{ij}^o} \vert 0, s_{ij}^{o}\right)  \\
&= \bN \left(w_{jk}^{o'}\sum_{i<j}\frac{s_{ij}^{o}w_{ij}^{o}}{\sum_{i<j} s_{ij}^o}|0, \gamma_{jk}^{o'}\right)  
\end{aligned}
\end{equation}
}where

\vspace{-1cm}
{\small
\begin{align}
\label{eq:gamma}
\gamma_{jk}^{o'} \define  \left(\frac{1}{\sum\limits_{i<j}\sum\limits_{o\in \mathcal{O}}{s}_{ij}^{o}} + \frac{1}{s_{jk}^{o'}}\right)^{-1}.
\end{align}
}

\vspace{-0.5cm}
Since $s_{ij}^o >0$ in~\eqref{eq:ppath} always holds, regardless of what $s_{ij}^o$ is, we can use the following simpler alternative to substitute~\eqref{eq:ppath} to encode Proposition~\ref{proposition:1}:

\vspace{-0.6cm}
{\small
\begin{align}
p(c(w_{jk}^{o'}, w_{ij}^{o} ))\define \bN \left({w_{jk}^{o'}\sum_{i<j} w_{ij}^{o}}|0, \gamma_{jk}^{o'}\right).
\label{eq:gamma_new}
\end{align}
}

\vspace{-0.5cm}
Interestingly, \eqref{eq:gamma_new} and~\ref{eq:gamma_logic} are equivalent. This means that we may find an algorithm that is able to find the sparse solution in a probabilistic manner. However, Gaussian distribution, in general, does not promote sparsity. Fortunately, some classic yet powerful techniques in Bayesian learning are applicable, \textit{i.e.}, sparse Bayesian learning (SBL) \citep{tipping2001sparse,pan2017bayesian} and automatic relevance determination (ARD) prior \citep{mackay1996bayesian,neal1995bayesian} in Bayesian neural networks.

\subsection{Zero Operation Ruling All}
In our paper, we do not include \textit{zero} operation as a primitive operation. Instead, between node $i$ and $j$ we compulsively add one more node $i'$ and allow only a single \emph{identity} operation (see Figure~\ref{fig:1}f). The associated weight $w_{ii'}$ is trainable and initialized to $1$ as well as its switch $s_{ii'}$. The idea is that if $s_{ii'}$ is $\text{OFF}$, all the operations from $i'$ to $j$ will be disabled as a consequence. Then $\gamma_{jk}^{o'}$ in~\eqref{eq:gamma} can be substituted by

\vspace{-0.7cm}
{\small
\begin{align}
\label{eq:gamma-2}
\gamma_{jk}^{o'} \define  \left(\frac{1}{s_{ii'}}+\frac{1}{\sum\limits_{i'<j}\sum\limits_{o\in \mathcal{O}}{s}_{i'j}^{o}} + \frac{1}{s_{jk}^{o'}}\right)^{-1}.
\end{align}
}

\vspace{-0.6cm}
\section{Bayesian Learning Search Strategy}
\subsection{Bayesian Neural Network}

The likelihood for the network weights~$\boldsymbol{\mathcal{W}}$ and the noise precision~$\sigma^{-2}$ with data~${\mathcal{D}=(\mathbf{X},\mathbf{Y})}$ is

\vspace{-0.6cm}
{\small
\begin{align}
p(\mathbf{Y}\given\boldsymbol{\mathcal{W}},\mathbf{w}, \mathbf{X},\sigma^2) &= \prod_{n=1}^N \mathcal{N}(y_n\given \text{Net}(\mathbf{x}_n;\boldsymbol{\mathcal{W}},\mathbf{w}); \sigma^2)\,.\label{eq:likelihood}
\end{align}}To complete our probabilistic model, we specify a Gaussian prior distribution for each entry in each of the weight matrices in $\boldsymbol{\mathcal{W}}$. In particular,

\vspace{-0.5cm}
{\small
\begin{align}
&p(\boldsymbol{\mathcal{W}}\given\lambda) = \prod_{i<j} \prod_{o \in \mathcal{O}} \mathcal{N}(\mathcal{W}_{ij}^{o}\given0,\lambda^{-1})\,\label{eq:prior_weights_1}\\
&p(\mathbf{w}\given \mathbf{s}) = \prod_{j<k} \prod_{o \in \mathcal{O}} \prod_{o' \in \mathcal{O}}\bN \left({w_{jk}^{o'}\sum_{i<j} w_{ij}^{o}}|0, \gamma_{jk}^{o'}\right)
\label{eq:prior_weights_2}
\end{align}
}where $\gamma_{jk}^{o'}$ is defined in \eqref{eq:gamma-2}.  
$\sigma^{-2}$,~$\lambda$ and~$\mathbf{s}$ are \emph{hyperparameters}. Importantly, there is an individual hyperparameter associated independently with every edge weight and a single one with all network weight. Follow Mackay's evidence framework \citep{mackay1992bayesian}, 'hierarchical priors' are employed on the latent variables using Gamma priors on the inverse variances. The hyper-priors for~$\sigma^{-2}$,~$\lambda$ and~$\mathbf{s}$ are chosen to be a gamma distribution \citep{berger2013statistical}, \textit{i.e.}, $p(\lambda) = \DistGam(\lambda\given a^\lambda$, $b^\lambda), p(\beta) = \DistGam(\beta\given a^{\beta},b^{\beta})$ with $\beta = \sigma^{-2}$, and $p(s_{ij}^o) = \DistGam(s_{ij}^o \given a^{s_{ij}^o},b^{s_{ij}^o})$. Essentially, the choice of Gamma priors has the effect of making the marginal distribution of the latent variable prior the non-Gaussian Student’s \emph{t} therefore promoting the sparsity \citep[Section 2 and 5.1]{tipping2001sparse}. To make these priors non-informative (\textit{i.e.}, flat), we simply fix $a$ and $b$ to zero by assuming uniform scale priors for analysis and implementation. This formulation of prior distributions is a type of \emph{hierarchically constructed automatic relevance determination} (HARD) prior which is built upon classic ARD prior \citep{neal1995bayesian, tipping2001sparse}.

The posterior distribution for the parameters~$\boldsymbol{\mathcal{W}}$,~$\gamma$ and~$\lambda$ can then be obtained by applying Bayes' rule:

\vspace{-0.6cm}{\small
\begin{align}
&p(\boldsymbol{\mathcal{W}},\mathbf{w}, \lambda, \mathbf{s}, \sigma^2 \given\mathcal{D}) \nonumber \\
=&
\frac{p(\mathbf{Y}\given \mathbf{X}, \boldsymbol{\mathcal{W}},\mathbf{w}, \lambda, \mathbf{s}, \sigma^2)
p(\boldsymbol{\mathcal{W}}\given\lambda)p(\mathbf{w}\given \mathbf{s})p(\lambda)p(\gamma)p(\sigma^2)}{p(\mathbf{Y}\given\mathbf{X})},\label{eq:exact_posterior}
\end{align}
}where~$p(\mathbf{Y}\given\mathbf{X})$ is a normalization constant. Given a new input vector~$\mathbf{x}_\star$, we can make predictions for its output~$\mathbf{y}_\star$ using the predictive distribution given by

\vspace{-0.6cm}
{\small
\begin{equation}
\begin{aligned}
\label{eq:predictive_distribution}
& p(\mathbf{y}_\star\given\mathbf{x}_\star,\mathcal{D}) \\
= &\int\!\! p(\mathbf{y}_\star|\mathbf{x}_\star, \boldsymbol{\mathcal{W}},\mathbf{w}, \lambda, \mathbf{s}, \sigma^2) 
p(\boldsymbol{\mathcal{W}},\mathbf{w}, \lambda, \mathbf{s}, \sigma^2\given \mathcal{D})\, \\
& \ \ \ \ d\sigma^2\, d\lambda\, d\mathbf{s}\, d\boldsymbol{\mathcal{W}}d\mathbf{w},
\end{aligned}
\end{equation}
}where \begin{math}{p(\mathbf{y}_\star|\mathbf{x}_\star, \boldsymbol{\mathcal{W}},\mathbf{w}, \lambda, \mathbf{s}, \sigma^2)  =
\mathcal{N}(\mathbf{y}_\star\given \text{Net}(\mathbf{x}_\star), \sigma^2)}\end{math}. However, the exact computation of  $p(\boldsymbol{\mathcal{W}},\mathbf{w}, \lambda, \mathbf{s}, \sigma^2\given \mathcal{D})$ and $p(\mathbf{y}_\star\given\mathbf{x}_\star)$ is not tractable in most cases. Therefore, in practice, we have to resort to approximate inference methods. 

It should be noted that $\lambda$ is the same for all network parameters. However, it can be different for $\boldsymbol{\mathcal{W}}$ or constructed to represent the structural sparsity for Convolutional kernels in NN aiming for Network Compression, which is related to Bayesian compression \citep{louizos2017bayesian} and structural sparsity compression \citep{wen2016learning}. We give some examples in Figure~\ref{fig:2} and more can be found in the Appendix~\ref{app:structural compression experiment} where extremely sparse networks on MNIST and CIFAR-10 can be obtained without accuracy deterioration. Since our main focus is on architecture parameters, without breaking the flow, we will fix $\lambda$ which is equivalent to the weight decay coefficient in SGD and $\sigma^2= 0.01$ that is equivalent to the regularization coefficient for network parameters. 

In case of uniform hyperpriors, we only need to maximize the term $p(\mathbf{Y} \given  \lambda, \mathbf{s}, \sigma^2)$ \citep{mackay1992bayesian,berger2013statistical}

\vspace{-0.6cm}
{\small
\begin{equation}
\begin{aligned}
\int \int p(\mathbf{Y}\given\boldsymbol{\mathcal{W}},\mathbf{w}, \mathbf{X},\sigma^2) p(\boldsymbol{\mathcal{W}}\given\lambda) p(\mathbf{w}\given \mathbf{s}) d\boldsymbol{\mathcal{W}} d\mathbf{w}.
\label{eq:mlh}
\end{aligned}
\end{equation}}We assume that the distribution of data likelihood belongs to the exponential family

\vspace{-0.5cm}
{\small
\begin{equation}
\begin{aligned}
p(\mathbf{Y}\given\boldsymbol{\mathcal{W}},\mathbf{w}, \mathbf{X},\sigma^2) \sim \exp \left(-E_D(\mathbf{Y}; \text{Net}(\mathbf{X};\boldsymbol{\mathcal{W}}, \mathbf{w}); \sigma^2)\right)
\label{eq:expolikelihood}
\end{aligned}
\end{equation}
}where $E_D(*)$ is the \emph{energy function} over data.

\subsection{Laplace Approximation and Efficient Hessian Computation}
In related Bayesian models, the quantity in~\eqref{eq:mlh} is known as the marginal likelihood and its maximization is known as the type-II maximum likelihood method \citep{berger2013statistical}. And neural networks can also be treated in a Bayesian manner known as Bayesian learning for neural networks \citep{mackay1992practical,neal1995bayesian}. Several approaches have been proposed based on, \textit{e.g.}, the Laplace approximation \citep{mackay1992practical}, Hamiltonian Monte Carlo \citep{neal1995bayesian}, expectation propagation \citep{jylanki2014expectation, hernandez2015probabilistic}, and variational inference \citep{hinton1993keeping,graves2011practical}. Among these methods, we adopt Laplace approximation. However, Laplace approximation requires computation of the inverse Hessian of log-likelihood, which can be infeasible to compute for large networks. Nevertheless, we are motivated by 1) its easy implementation, especially using recent popular deep learning open source software; 2) versatility for modern NN structures such as CNN and RNN as well as their modern variations; 3) close relationship between computation of Hessian and Network Compression using Hessian metric  \citep{lecun1990optimal,hassibi1993optimal}; 4) acceleration effect to training convergence by second-order optimization algorithm  \citep{botev2017practical} to which it is related. 
In this paper, we propose the efficient calculation/approximation of Hessian for convolutional layer and architecture parameter. The detailed calculation procedures are explained in Appendix ~\ref{appendix:convhessian} and ~\ref{appendix:scalarhessian} respectively.

\begin{figure}[t]
    \centering
    \includegraphics[scale = 0.17]{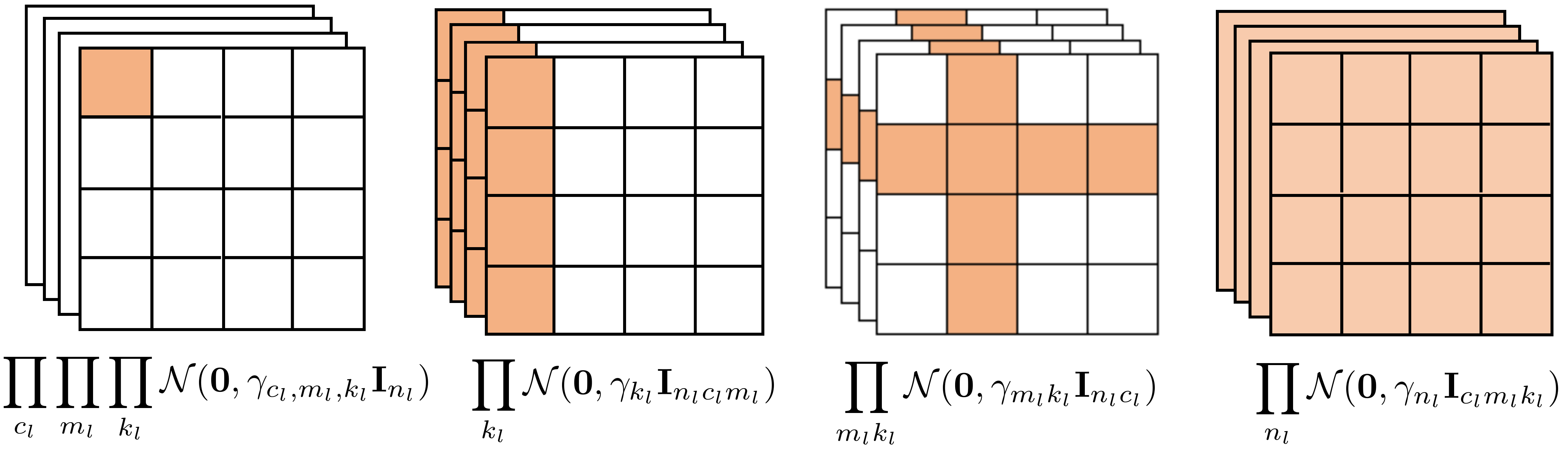}
    \vspace{-0.4cm}
    \caption{Structural Sparsity} 
    \vspace{-0.6cm}
    \label{fig:2}
\end{figure}

\vspace{3cm}
\subsection{Optimization Algorithm}
\label{app:edge_wised optimization algorithm}
As analyzed before, the optimization objective of searching architecture becomes removing redundant edges. The training algorithm is iteratively indexed by $t$. Each iteration may contain several epochs. The pseudo code is summarized in Algorithm~\ref{algorithm}. The cost function is simply maximum likelihood over the data $D$ with regularization whose intensity is controlled by the re-weighted coefficient $\omega$ 

\vspace{-0.5cm}
{\small
\begin{align}
    \mathcal{L}_{D} = {E_D}(\cdot) +  \lambda_w \sum\limits_{j<k} \sum\limits_{o' \in \mathcal{O}}  \|{\omega_{jk}^{o'}(t)  w_{jk}^{o'}}\|_1 + \lambda \|\boldsymbol{\mathcal{W}}\|_2^2
    \label{algo:eq1}
\end{align}
}The derivation can be found in Appendix \ref{general:prior:sec:costfunction} and \ref{algrithm procedures}. The algorithm mainly includes five parts.
The first part is to jointly train $\mathcal{W}$ and $\mathbf{w}$. The second part is to freeze the architecture parameters and prepare to compute their Hessian. The third part is to update the variables associated with the architecture parameters. The fourth part is to prune the architecture parameters and the pruned net will be trained in a standard way in the fifth part. As discussed previously on the drawback of magnitude based pruning metric, 
\begin{algorithm}[t]
\caption{
BayesNAS Algorithm. 
}
\begin{algorithmic}
    \REQUIRE 
    $\boldsymbol{\gamma}(0), \boldsymbol{\omega}(0), \mathbf{w}(0) = \mathbf{1}$; $\lambda=0.01$; sparsity intensity $\lambda_{w}^o \in \R^{+}$
    \ENSURE 
    \FOR{$t=1$ to $T_{\max}$}
            \STATE 1. Update $\mathbf{w}$ and $\mathcal{W}$ by minimizing $\mathcal{L}_D$ in \eqref{algo:eq1}
            \STATE 2. Compute Hessian for $\mathbf{w}$ (\eqref{eq:pre_Hessian_cat}, \ref{eq:scalar_pre_Hessian}, \ref{eq:scalar_pre_Hessian_approximation})  
            \STATE
            3. Update variables associated with $\mathbf{w}$
            \WHILE{$i<j<k, o, o'\in \mathcal{O}$}
                \STATE 
                \vspace{-0.6cm}
                {\small
                \begin{align}
                \label{eq:hyper_paras_update} 
                & C_{jk}^{o'}(t) = \left(\frac{1}{{\gamma_{jk}^{o'}}(t-1)} + {\mathbf{H}_{jk}^{o'}(t)}\right)^{-1}
                \\
                & {\omega}_{jk}^{o'}(t) =\frac{ \sqrt{{\gamma}_{jk}^{o'}(t-1)- {\rm C}_{jk}^{o'}(t)}}{{\gamma}_{jk}^{o'}(t-1)}
                \\
                & s_{jk}^{o'}(t) = \abs{{\frac{{w}_{jk}^{o'}(t)}{{\omega_{jk}^{o'}}(t)}}} 
                \\
                & {\gamma_{jk}^{o'}}(t) \text{ is given by } \ref{eq:gamma} \text{ or } \ref{eq:gamma-2}
                \end{align}}
        \ENDWHILE
    \STATE {4. Prune the architecture if the  entropy $\frac{\ln(2\pi e {\gamma_{jk}^{o'}})}{2} \leq 0 $}
    \STATE {5. Fix $\mathbf{w} = \mathbf{1}$, train the pruned net in the standard way}
    \ENDFOR
\end{algorithmic}
\label{algorithm}
\end{algorithm}  
we propose a new metric based on maximum entropy of the distribution. Since $p({w_{jk}^{o'}})$ in~\eqref{eq:ppath} is Gaussian with zero mean ${\gamma_{jk}^{o'}}$ variance, the maximum entropy is $\frac{1}{2}\ln(2\pi e{\gamma_{jk}^{o'}})$. We set the threshold for ${\gamma_{jk}^{o'}}$ to prune related edges when $\frac{1}{2}\ln(2\pi e{\gamma_{jk}^{o'}}) \leq 0$, \textit{i.e.}, ${\gamma_{jk}^{o'}} \leq 0.0585$.

The algorithm can be easily transferred to other scenarios. One scenario involves proxy tasks to find the cell. Similar to \eqref{algo:eq1}, we group same edge/operation in the repeated stacked cells where $g$ is the index. The cost function for proxy tasks is then given as follows in the form of re-weighted group Lasso:

\vspace{-0.6cm}
{\small
\begin{align}
\mathcal{L}_{D} = {E_D}(\cdot) +  \lambda_w \sum_{g}\sum\limits_{j<k} \sum\limits_{o' \in \mathcal{O}}  \|{\omega_{jk,g}^{o'}(t)  w_{jk,g}^{o'}}\|_2 + \lambda \|{\mathcal{W}}\|_2^2
\end{align}}

\vspace{-0.5cm}
The details are summarized in Algorithm \ref{algorithm:proxy_cell} of Appendix~\ref{app:algorithm for proxy tasks}. Another scenario is on Network Compression with structural sparsity, which is summarized in Algorithm \ref{algorithm:structural_sparsity} of Appendix \ref{app:structural compression algorithm}.

\begin{table*}
\caption{Classification errors of BayesNAS and state-of-the-art image classifiers on CIFAR-10.}
\label{table:evalcifar}
\begin{center}
\newcommand{\tabincell}[2]{\begin{tabular}{@{}#1@{}}#2\end{tabular}}
\resizebox{0.935\textwidth}{!}{
\begin{tabular}{lcccll}
\bottomrule
\multicolumn{1}{c}{\bf Architecture}  &\multicolumn{1}{c}{\tabincell{c}{\bf Test Error\\\bf (\%)}}  &\multicolumn{1}{c}{\tabincell{c}{\bf Params\\\bf (M)}} &\multicolumn{1}{c}{\tabincell{c}{\bf Search Cost\\\bf (GPU days)}} 
&\multicolumn{1}{c}{\tabincell{c}{\bf Search\\\bf Method}}
\\ \hline \vspace{-0.2cm}\\
DenseNet-BC \citep{huang2017densely}  &3.46    &25.6   &-   &manual  \\ 
\hline \vspace{-0.2cm} \\
NASNet-A + cutout \citep{zoph2017learning}  &2.65     &3.3   &1800    &RL \\
AmoebaNet-B + cutout \citep{real2018regularized}   &2.55 $\pm$ 0.05  &2.8 &3150 &evolution \\
Hierarchical Evo \citep{liu2017hierarchical}   &3.75 $\pm$ 0.12  &15.7 &300 &evolution \\
PNAS \citep{liu2018progressive}   &3.41 $\pm$ 0.09   &3.2     &225   &SMBO \\
ENAS + cutout \citep{pham2018efficient}   &2.89     &4.6   &0.5    &RL \\
\hline \vspace{-0.2cm} \\
Random search baseline + cutout \citep{liu2018darts}    &3.29 $\pm$ 0.15     &3.2     &1     &random     \\
DARTS (2nd order bi-level) + cutout \citep{liu2018darts}   & 2.76 $\pm$ 0.09   &3.4   &1  &gradient \\
SNAS (single-level) + moderate con + cutout \citep{xie2019snas}  &2.85 $\pm$ 0.02    &2.8  &1.5  &gradient  \\
DSO-NAS-share+cutout \citep{zhang2018single} & 2.84 $\pm$ 0.07 & 3.0 &1 &gradient \\
Proxyless-G + cutout  \cite{cai2019iclr} & 2.08 & 5.7 &- &gradient \\
\hline \vspace{-0.2cm}\\
BayesNAS  + cutout + $\lambda_w^o = 0.01$                            & 3.02$\pm$0.04 &  2.59$\pm$0.23  & 0.2 & gradient \\ 
BayesNAS  + cutout + $\lambda_w^o = 0.007$                            & 2.90$\pm$0.05 &  3.10$\pm$0.15  & 0.2 & gradient \\ 
BayesNAS  + cutout + $\lambda_w^o = 0.005$                            & 2.81$\pm$0.04 &  3.40$\pm$0.62  & 0.2 & gradient \\ 
BayesNAS  + TreeCell-A + Pyrimaid backbone + cutout                             & 2.41 &  3.4  & 0.1 & gradient \\\bottomrule
\vspace{-0.8cm}
\end{tabular}}
\end{center}
\end{table*}

\begin{table*}[ht]
	\centering
	\caption{Comparison with state-of-the-art image classifiers on ImageNet in the mobile setting.}
	\label{tab:imagenet-results}
 \resizebox{0.86\textwidth}{!}{
	\begin{tabular}{@{}lccccccc@{}}
		\toprule
		\multirow{2}{*}{\bf Architecture} & \multicolumn{2}{c}{\textbf{Test Error (\%)}}      & \textbf{Params}   & \textbf{Search Cost}  & \multirow{2}{*}{\bf Search Method} \\ \cline{2-3}
		& top-1 & top-5      & \textbf{(M)}   & \textbf{(GPU days)} & \textbf{} \\ \midrule
		Inception-v1 \citep{szegedy2015going} & 30.2 & 10.1 & 6.6  & -- & manual \\
		MobileNet \citep{howard2017mobilenets} & 29.4 & 10.5 & 4.2  & -- & manual \\ 
		ShuffleNet 2$\times$ (v1) \citep{zhang2018shufflenet} & 29.1 & 10.2 & $\sim$5  & -- & manual \\
		ShuffleNet 2$\times$ (v2) \citep{zhang2018shufflenet} & 26.3 & -- & $\sim$5  & -- & manual \\ \midrule
		NASNet-A \citep{zoph2017learning} & 26.0 & 8.4 & 5.3  & 1800 & RL \\
		NASNet-B \citep{zoph2017learning} & 27.2 & 8.7 & 5.3  & 1800 & RL \\
		NASNet-C \citep{zoph2017learning} & 27.5 & 9.0 & 4.9  & 1800 & RL \\
		AmoebaNet-A \citep{real2018regularized} & 25.5 & 8.0 & 5.1  & 3150 & evolution \\
		AmoebaNet-B \citep{real2018regularized} & 26.0 & 8.5 & 5.3  & 3150 & evolution \\
		AmoebaNet-C \citep{real2018regularized} & 24.3 & 7.6 & 6.4  & 3150 & evolution \\
		PNAS \citep{liu2018progressive} & 25.8 & 8.1 & 5.1  & $\sim$225 & SMBO \\ 
		DARTS \cite{liu2018darts}    &  26.9   &  9.0 & 4.9  & 4 & gradient \\ \midrule
		BayesNAS ($\lambda_w^o = 0.01$)    &  28.1   &  9.4 & 4.0  & 0.2 & gradient \\
		BayesNAS ($\lambda_w^o = 0.007$)    &  27.3   &  8.4 & 3.3  & 0.2 & gradient \\
		BayesNAS ($\lambda_w^o = 0.005$)    &  26.5   &  8.9 & 3.9  & 0.2 & gradient \\
		\bottomrule
	\end{tabular}
}
\end{table*}
\section{Experiments}
The experiments focus on two scenarios in NAS: proxy NAS and proxyless NAS. For proxy NAS, we follow the pipeline in DARTS \cite{liu2018darts} and SNAS \citep{xie2019snas}. First BayesNAS is applied to search for the best convolutional cells in a complete network on CIFAR-10. Then a network constructed by stacking learned cells is retrained for performance comparison. For proxyless NAS, we follow the pipeline in ProxylessNAS \cite{cai2019iclr}. First, the tree-like cell from \cite{cai2018path} with multiple paths is integrated into the PyramidNet \cite{han2017deep}. Then we search for the optimal path(s) within each cell by BayesNAS. Finally, the network is reconstructed by retaining only the optimal path(s) and retrained on CIFAR-10 for performance comparison. Detailed experiments setting is in Appendix \ref{datapr}. 
\subsection{Proxy Search}
\paragraph{Motivation}
Unlike DARTS and SNAS that rely on validation accuracy during or after search, we use $\gamma$ in BayesNAS as performance evaluation criterion which enables us to achieve it in an one-shot manner.
\vspace{1.5cm}
\begin{figure}[ht]
  \centering
    \includegraphics[scale = 0.18]{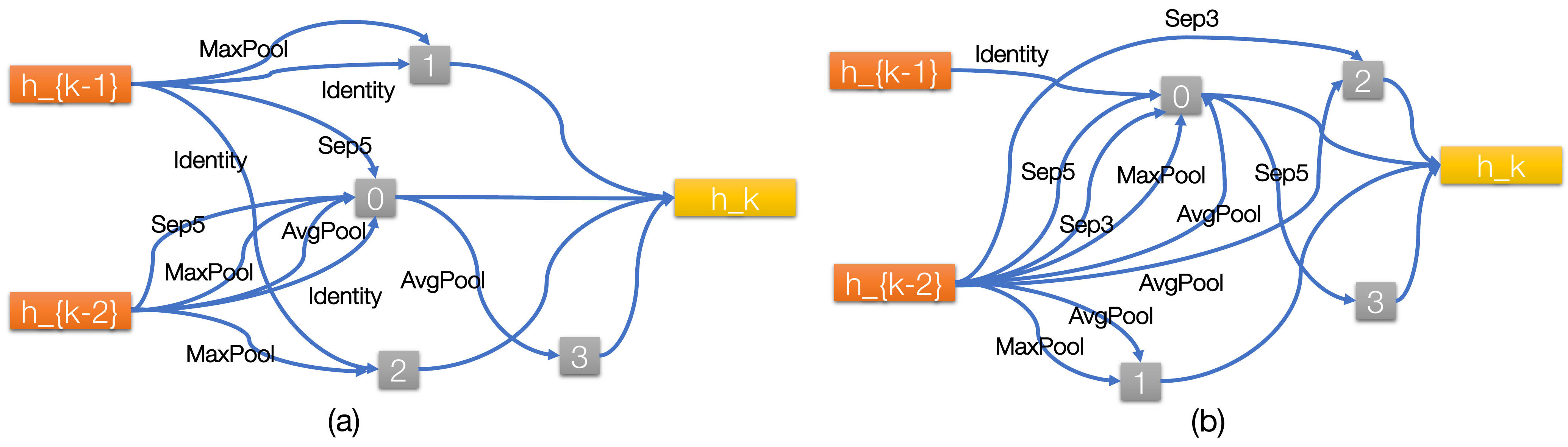}
  \caption{Normal and reduction cell found by BayesNAS with $\lambda_w^o = 0.01$.}
  \label{fig:3}
\end{figure}
\paragraph{Search Space}
Our setup follows DARTS and SNAS, where convolutional cells of 7 nodes are stacked for multiple times to form a network. The input nodes, \textit{i.e.}, the first and second nodes, of cell $k$ are set equal to the outputs of cell $k-1$ and cell $k-2$ respectively, with $1 \times 1$ convolutions inserted as necessary, and the output node is the depthwise concatenation of all the intermediate nodes. Reduction cells are located at the 1/3 and 2/3 of the total depth of the network to reduce the spatial resolution of feature maps.
Details about all operations included are shown in Appendix \ref{operations}. Unlike DARTS and SNAS, we exclude \emph{zero} operations.
\vspace{-0.2cm}
\paragraph{Training Settings}
In the searching stage, we train a small network stacked by 8 cells using BayesNAS with different $\lambda_w$. This network size is determined to fit into a single GPU. Since we cache the feature maps in memory, we can only set batch size as 18. The optimizer we use is SGD optimizer with momentum 0.9 and fixed learning rate 0.1. Other training setups follow DARTS and SNAS (Appendix \ref{details}). The search takes about $3$ hours on a single GPU\footnote{All the experiments were performed using NVIDIA TITAN V GPUs}.
\vspace{-0.2cm}
\paragraph{Search Results}
The normal and reduction cells learned on CIFAR-10 using BayesNAS are shown in Figure \ref{fig:3}a and \ref{fig:3}b. A large network of 20 cells where cells at 1/3 and 2/3 are reduction cells is trained from scratch with the batch size of 128. The validation accuracy is presented in Table \ref{table:evalcifar}. The test error rate of BayesNAS is competitive against state-of-the-art techniques and BayesNAS is able to find convolutional cells with fewer parameters when compared to DARTS and SNAS.
\begin{figure}
  \centering
      \includegraphics[scale=0.2]{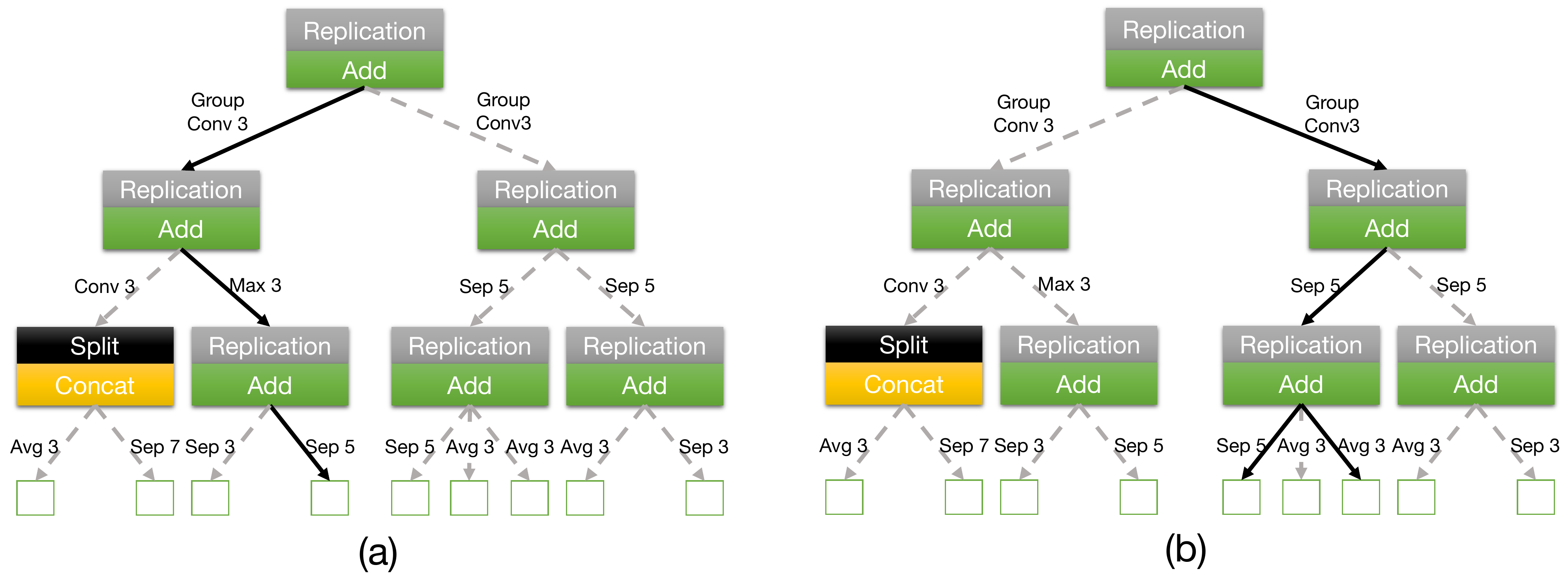}
  \caption{The pruned tree-cell: (a) The chain-like where only one path exists in the cell connecting the input of the cell to its output. (b) The inception structure where divergence and convergence both exist in the cell.
  The solid directed lines denote the path found by BayesNAS while the dashed ones denote the paths discarded.}
  \label{fig:treecell}
\end{figure}
\vspace{3cm}
\subsection{Proxyless Search}
\paragraph{Motivation}
Using existing tree-like cell, we apply BayesNAS to search for the optimal path(s) within each cell. Varying from proxy search, cells do not share architecture  in proxyless search. 
\vspace{-0.2cm}
\paragraph{Search Space}
The backbone used is PyramidNet with three layers each consisting of $18$ bottleneck blocks and $\alpha=84$. All $3 \times 3$ convolution in bottleneck blocks are replaced by the tree-cell that has in total $9$ possible paths within. The groups for grouped convolution is set to $2$. For the detailed structure of the tree-cell, we refer to \cite{cai2018path}.
\vspace{-0.2cm}
\paragraph{Training Settings}
In the searching stage, we set batch size to 32 and learning rate to 0.1. We use the same optimizer as for proxy search. The $\lambda$ of BayesNAS for each possible path is set to $1\times10^{-2}$.           
\vspace{-0.2cm}
\paragraph{Search Results}
Because each cell can have a different structure in proxyless setting, we demonstrate only two typical types of cell structure among all of them in Figure \ref{fig:treecell}a and Figure \ref{fig:treecell}b. The first type is a chain-like structure where only one path exists in the cell connecting the input of the cell to its output. The second type is an inception structure where divergence and convergence both exist in the cell. Our further observation reveals that some cells are dispensable with respect to the entire network. After the architecture is determined, the network is trained from scratch with the batch size of 64, learning rate as 0.1 and cosine annealing learning rate decay schedule \cite{loshchilov2016sgdr}. The validation accuracy is also presented in Table \ref{table:evalcifar}. Although test error increases slightly compared to \cite{cai2019iclr}, there is a significant drop in the number of model parameters to be learned which is beneficial for both training and inference.
\section{Transferability to ImageNet}
For ImageNet mobile setting, the input images are of size 224$\times$224. A network of 14 cells is trained for 250 epochs with batch size 128, weight decay $3 \times 10^{-5}$ and initial SGD learning rate 0.1 (decayed by a factor of 0.97 after each epoch). Results in Table~\ref{tab:imagenet-results} show that the cell learned on CIFAR-10 can be transfered to ImageNet and is capable of achieving competitive performance.

\section{Conclusion and Future Work}
We introduce BayesNAS that can directly learn a sparse neural network architecture. We significantly reduce the search time by using only one epoch to get the candidate architecture. Our current implementation is inefficient by caching all the feature maps in memory to compute the Hessian. However, Hessian computation can be done along with backpropagation which will potentially further reduce the searching time and scale our approach to larger search spaces.    
\newpage
\section*{Acknowledgements}
The work of Hongpeng Zhou is sponsored by the program of China Scholarships Council (No.201706120017).

\renewcommand\bibname{References}
\bibliography{ref}
\bibliographystyle{icml2019}

\newpage
\onecolumn
{\large
\begin{center}
{\centering \textbf{Appendix}}
\end{center}
}
\numberwithin{equation}{section}
\makeatletter
\renewcommand{\theequation}{S\arabic{equation}} 
\renewcommand{\thesection}{S\arabic{section}}   
\renewcommand{\thetable}{S\arabic{table}}   
\renewcommand{\thefigure}{S\arabic{figure}}
\numberwithin{equation}{subsection}

\appendix
\section{BayesNAS Algorithm Derivation}
\label{app:optimization algorithm derivation}
\subsection{Algorithm Derivation}
\label{general:prior:sec:costfunction}
In this subsection, we explain the detailed algorithm of updating hyper-parameters for the abstracted single motif as shown in Figure~\ref{fig:1}e. The proposition about optimization objective will be illustrated firstly.
\begin{proposition}
\label{general:proposition:cost}
Suppose the likelihood of the architecture parameters of a neural network $\mathbf{w}$ could be formulated as one exponential family distribution \begin{math} p(\mathbf{Y}\given \mathbf{w}, \mathbf{X},{\mathbf{s}}) \sim \exp \left(-E_D(\mathbf{Y}; \text{Net}(\mathbf{X}; \mathbf{w}); \mathbf{s})\right) \end{math}, where $\mathcal{D}=(\mathbf{X},\mathbf{Y})$ is the given dataset, $\mathbf{s}$ stands for the uncertainty and $E_D(*)$ represents the \emph{energy function} over data. 
The sparse prior with super Gaussian distribution for each architecture parameter has been defined in \eqref{eq:prior_weights_2}.
The unknown architecture parameter of the network $\mathbf{w}$
and hyperparameter ${\mathbf{s}}$ can be approximately obtained by solving the following optimization problem
\begin{equation}
    \min_{ \mathbf{w}, {\mathbf{s}}} \mathcal{L}( \mathbf{w}, {\mathbf{s}}) 
\end{equation}
specially, for the architecture parameter $\mathbf{w_{jk}^{o'}}$ which is associated with one operation of the edge $e_{jk}$ ($j<k$), the optimization problem could be reformulated as:   
\begin{equation}
\begin{aligned}
&\mathcal{L}( \mathbf{w}_{jk}^{o'}, {\mathbf{s}}_{jk}^{o'})
= {\mathbf{w}_{jk}^{o'}} \Hessian({\mathbf{w}_{jk}^{o'}}^* ) {\mathbf{w}_{jk}^{o'}}
+ 2 {\mathbf{w}_{jk}^{o'}} \left[\grad({\mathbf{w}_{jk}^{o'}}^* )-\Hessian({\mathbf{w}_{jk}^{o'}}^*) {\mathbf{w}_{jk}^{o'}}^{*}\right]
+{\mathbf{w}_{jk}^{o'}}  {{\mathbf{s}}_{jk}^{o'}}^{-1} {\mathbf{w}_{jk}^{o'}} 
\\
&+\log|{\mathbf{s}}_{jk}^{o'}|+\log|\Hessian({\mathbf{w}_{jk}^{o'}}^* )+ {{\mathbf{s}}_{jk}^{o'}}^{-1} |
-2\log{b({\mathbf{w}_{jk}^{o'}}^*)} 
\label{general:prior:eq:cost}
\end{aligned}
\end{equation}
where ${\mathbf{w}_{jk}^{o'}}^*$ is arbitrary, and 
$$\grad({\mathbf{w}_{jk}^{o'}}^* ) \define \nabla E_D({\mathbf{w}_{jk}^{o'}} )|_{{\mathbf{w}_{jk}^{o'}}^{*}},\Hessian({\mathbf{w}_{jk}^{o'}}^* ) \define  \nabla \nabla E_D({\mathbf{w}_{jk}^{o'}} ) |_{{\mathbf{w}_{jk}^{o'}}^{*}}$$
and

$$
b({\mathbf{w}_{jk}^{o'}}^* ) \define
\exp \left \{- \left(\frac{1}{2}{{\mathbf{w}_{jk}^{o'}}^*} \Hessian({\mathbf{w}_{jk}^{o'}}^* ) {{\mathbf{w}_{jk}^{o'}}^*} - {{\mathbf{w}_{jk}^{o'}}^*} \grad({\mathbf{w}_{jk}^{o'}}^*)
+E_D({\mathbf{w}_{jk}^{o'}}^* ) \right) \right \}
$$
\end{proposition}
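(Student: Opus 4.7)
The plan is to derive the claimed cost in \eqref{general:prior:eq:cost} from Type-II marginal likelihood maximization combined with a Laplace approximation of the data likelihood, and then to specialize the joint cost to a single architecture parameter. Starting from Bayes' rule, the relevant marginal log-evidence to be maximized over hyperparameters is
\[
-2\log p(\mathbf{Y}\mid \mathbf{X}, \mathbf{s}) \;=\; -2\log \int p(\mathbf{Y}\mid \mathbf{w},\mathbf{X})\, p(\mathbf{w}\mid \mathbf{s})\, d\mathbf{w},
\]
with $p(\mathbf{Y}\mid\mathbf{w},\mathbf{X})\propto \exp(-E_D(\mathbf{w}))$ by the exponential-family assumption and $p(\mathbf{w}\mid\mathbf{s})$ the zero-mean Gaussian from \eqref{eq:prior_weights_2}. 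Rather than integrating $\mathbf{w}$ out exactly, I retain $\mathbf{w}$ as a free variable so the resulting $\mathcal{L}(\mathbf{w},\mathbf{s})$ is a joint objective suitable for the alternating updates in Algorithm~\ref{algorithm}.

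\textbf{Key steps.} First I expand $E_D$ to second order around an arbitrary reference $\mathbf{w}^*$,
\[
E_D(\mathbf{w}) \approx E_D(\mathbf{w}^*) + \grad(\mathbf{w}^*)^{\top}(\mathbf{w}-\mathbf{w}^*) + \tfrac12 (\mathbf{w}-\mathbf{w}^*)^{\top} \Hessian(\mathbf{w}^*)(\mathbf{w}-\mathbf{w}^*).
\]
Doubling and rearranging recasts $2E_D(\mathbf{w})$ as $\mathbf{w}^{\top}\Hessian(\mathbf{w}^*)\mathbf{w} + 2\mathbf{w}^{\top}[\grad(\mathbf{w}^*)-\Hessian(\mathbf{w}^*)\mathbf{w}^*] - 2\log b(\mathbf{w}^*)$, where the $\mathbf{w}$-independent residue is exactly the exponent absorbed into the definition of $b(\mathbf{w}^*)$. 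Adding the Gaussian-prior contribution $\mathbf{w}^{\top}\mathbf{s}^{-1}\mathbf{w} + \log|\mathbf{s}|$ coming from $-2\log p(\mathbf{w}\mid \mathbf{s})$ (including the normalizer) then produces the data-fit, prior-penalty and $b(\mathbf{w}^*)$ pieces of the claimed cost.

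\textbf{Log-determinant from marginalization.} The remaining term $\log|\Hessian(\mathbf{w}^*)+\mathbf{s}^{-1}|$ arises from performing the Laplace-approximated Gaussian integral over $\mathbf{w}$: after substituting the quadratic expansion into the integrand, the combined quadratic has precision $\Hessian(\mathbf{w}^*)+\mathbf{s}^{-1}$ and the Gaussian integral contributes a factor $|\Hessian(\mathbf{w}^*)+\mathbf{s}^{-1}|^{-1/2}$, whose $-2\log$ is precisely that log-determinant. Collecting every contribution reproduces \eqref{general:prior:eq:cost} verbatim. Finally, because the cross-edge couplings in $\gamma_{jk}^{o'}$ enter only through $\mathbf{s}^{-1}$, minimizing the joint cost one coordinate at a time gives the per-edge form stated in the proposition for $(w_{jk}^{o'}, s_{jk}^{o'})$, with the other entries of $\mathbf{s}$ absorbed into the current value of $\gamma_{jk}^{o'}$ via \eqref{eq:gamma-2}.

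\textbf{Main obstacle.} The algebraic bookkeeping of completing the square is routine; the delicate point is justifying that the joint minimization of $\mathcal{L}(\mathbf{w},\mathbf{s})$ is equivalent to Type-II ML rather than merely to the Laplace-approximated MAP. This requires showing that $\mathcal{L}$ is a majorizer of $-2\log p(\mathbf{Y}\mid \mathbf{X},\mathbf{s})$ in the sense of MM algorithms: at the stationary point of $\mathcal{L}$ in $\mathbf{w}$ the $\mathbf{w}^*$-dependent pieces entering through $\grad$, $\Hessian$ and $b(\mathbf{w}^*)$ must cancel, so the choice of $\mathbf{w}^*$ used to build the surrogate is genuinely arbitrary (as the proposition asserts) and the surrogate becomes tight at the current estimate. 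Verifying this duality, and confirming that iterating $\mathbf{w}^{(t)} \leftarrow \mathbf{w}$ in the MM loop yields a monotonically non-increasing $-2\log p(\mathbf{Y}\mid\mathbf{X},\mathbf{s})$, is the main technical step.
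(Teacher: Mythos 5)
Your overall route---type-II marginal likelihood, second-order expansion of $E_D$ around an arbitrary reference $\mathbf{w}^*$, Gaussian integration, then a joint cost in $(\mathbf{w},\mathbf{s})$---is the same as the paper's. But there is a genuine gap at the centre of your argument: as written, you simultaneously keep $\mathbf{w}$ as a free variable (to produce the quadratic data-fit term and $\mathbf{w}^{\top}\mathbf{s}^{-1}\mathbf{w}$) and integrate $\mathbf{w}$ out (to produce $\log|\Hessian(\mathbf{w}^*)+\mathbf{s}^{-1}|$). These two operations cannot both be applied to the same variable unless you supply the identity that reconciles them, and your sketch only asserts that "collecting every contribution reproduces the cost." The paper closes exactly this hole by completing the square inside the integral: it writes $\hat E(\mathbf{w})=\frac{1}{2}(\mathbf{w}-\mathbf{m})C^{-1}(\mathbf{w}-\mathbf{m})+\hat E(\mathbf{Y})$ as in \eqref{general:prior:integral2}, evaluates the Gaussian integral (which yields the determinant factor), and then observes in \eqref{general:prior:data-dependent-term} that the data-dependent constant $\hat E(\mathbf{Y})$ equals $\min_{\mathbf{w}}\bigl[\frac{1}{2}\mathbf{w}\Hessian(\mathbf{w}^*)\mathbf{w}+\mathbf{w}\bigl(\grad(\mathbf{w}^*)-\Hessian(\mathbf{w}^*)\mathbf{w}^*\bigr)+\frac{1}{2}\mathbf{w}\mathbf{s}^{-1}\mathbf{w}\bigr]$, because it is the value of that convex quadratic at its minimizer (the posterior mean). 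With this variational identity, $-2\log$ of the quadratically-approximated evidence equals $\min_{\mathbf{w}}\mathcal{L}(\mathbf{w},\mathbf{s})$ \emph{exactly}, for every fixed $\mathbf{w}^*$; that is what legitimately lets $\mathbf{w}$ reappear as an optimization variable next to the log-determinant. This step is the substance of the proposition, and it is absent from your proposal.

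Your proposed way of filling the hole---showing $\mathcal{L}$ is an MM majorizer whose $\mathbf{w}^*$-dependent pieces "cancel at the stationary point" so that $\mathbf{w}^*$ is genuinely arbitrary---would not go through as stated and is also not needed. The cost $\mathcal{L}$ does depend on $\mathbf{w}^*$ through $\Hessian$, $\grad$ and $b(\mathbf{w}^*)$; "arbitrary" in the proposition only means the Taylor expansion may be taken at any point, with the \emph{approximation quality} (not the algebra) depending on that choice. The selection of $\mathbf{w}^*$ is treated separately in the paper's Corollary, where $\mathbf{w}^*$ is taken to be the $\ell_2$-regularized minimizer of $E_D$, and no monotone-descent or tightness property is claimed in the proposition itself; the alternating updates are justified afterwards as a convex-concave procedure. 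So the missing ingredient is the completing-the-square minimization identity above, not an MM argument.
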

It should also be noted that ${\mathbf{s}}_{jk}^{o'}$ represents the uncertainty of ${\mathbf{w}_{jk}^{o'}}$ without considering the dependency between edge $e_{jk}^{o'}$ and $\sum_{i<j}e_{ij}^{o}$, where $o'$ and $o$ stands for one possible operation in corresponding edges.

\begin{proof}
Given the likelihood with exponential family distribution $$p(\mathbf{Y}\given\mathbf{w}_{jk}^{o'}, \mathbf{X},{\mathbf{s}}_{jk}^{o'}) \sim \exp \left(-E_D(\mathbf{Y}; \text{Net}(\mathbf{X}; \mathbf{w}_{jk}^{o'}); {\mathbf{s}}_{jk}^{o'})\right)$$ 
as explained in \eqref{eq:ppath}, we define the prior of $w_{jk}^{o'}$ with Gaussian distribution 
$$p(w_{jk}^{o'}) = \bN(w_{jk}^{o'}|0, {\mathbf{s}}_{jk}^{o'})$$
The marginal likelihood could be calculated as: 
\begin{equation}
\begin{aligned}
p(\mathbf{Y}\given\mathbf{w}_{jk}^{o'})\mathcal{N}( {\mathbf{w}_{jk}^{o'}}|\mathbf{0},{{\mathbf{s}}_{jk}^{o'}})d{\mathbf{w}_{jk}^{o'}}
=  \int \exp\{-E_D({\mathbf{w}_{jk}^{o'}} )\}\bN( {\mathbf{w}_{jk}^{o'}}|\mathbf{0},{{\mathbf{s}}_{jk}^{o'}})d{\mathbf{w}_{jk}^{o'}} \\
\label{general:generalmarignal}
\end{aligned}
\end{equation}
Typically, this integral is intractable or has no analytical solution.

The mean and covariance can be fixed if the family is Gaussian. Performing a Taylor series expansion around some point ${\mathbf{w}_{jk}^{o'}}^{*}$, $E_D({\mathbf{w}_{jk}^{o'}})$ can be approximated as
\begin{equation}
\begin{aligned}
E_D({\mathbf{w}_{jk}^{o'}})
\approx E_D({\mathbf{w}_{jk}^{o'}}^*)+({\mathbf{w}_{jk}^{o'}}-{\mathbf{w}_{jk}^{o'}}^*)\grad({\mathbf{w}_{jk}^{o'}}^*)+\frac{1}{2}({\mathbf{w}_{jk}^{o'}}-{\mathbf{w}_{jk}^{o'}}^*) \Hessian({\mathbf{w}_{jk}^{o'}}^*) ({\mathbf{w}_{jk}^{o'}}-{\mathbf{w}_{jk}^{o'}}^*) 
\label{general:prior:Likelihood}
\end{aligned}
\end{equation}
where $\grad(\cdot)$ is the gradient and $\Hessian(\cdot)$ is the Hessian of the energy function $E$
\begin{subequations}
\begin{align}
\grad({\mathbf{w}_{jk}^{o'}}^*) &\define \nabla E_D({\mathbf{w}_{jk}^{o'}})|_{{\mathbf{w}_{jk}^{o'}}^{*}} 
\label{general:grad}\\
\Hessian({\mathbf{w}_{jk}^{o'}}^*) &\define  \nabla \nabla E_D({\mathbf{w}_{jk}^{o'}}) |_{{\mathbf{w}_{jk}^{o'}}^{*}}
\label{general:hessian}
\end{align}
\label{generarl:moment} 
\end{subequations}
To derive the cost function in~\eqref{general:prior:eq:cost}, we introduce the posterior mean and covariance:
\begin{subequations}
\begin{align}
\mean&= \variance \cdot \left[\grad({\mathbf{w}_{jk}^{o'}}^*) + \Hessian({\mathbf{w}_{jk}^{o'}}^*) {\mathbf{w}_{jk}^{o'}}^{*} \right], \label{general:prior:mean} \\
\variance&= \left[{{\mathbf{s}}_{jk}^{o'}}^{-1} + \Hessian({\mathbf{w}_{jk}^{o'}}^*) \right]^{-1}. \label{general:prior:variance} 
\end{align}
\label{general:prior:moment} 
\end{subequations}
Then define the following quantities
\begin{subequations}
\begin{align}
b({\mathbf{w}_{jk}^{o'}}^*) & \define \exp \left \{- \left(\frac{1}{2}{{\mathbf{w}_{jk}^{o'}}^*} \Hessian({\mathbf{w}_{jk}^{o'}}^*) {{\mathbf{w}_{jk}^{o'}}^*} - {{\mathbf{w}_{jk}^{o'}}^*} \grad({\mathbf{w}_{jk}^{o'}}^*)  + E_D({\mathbf{w}_{jk}^{o'}}^*) \right) \right \}, \label{general:abbv:b} \\
c({\mathbf{w}_{jk}^{o'}}^*) & \define \exp \left \{ \frac{1}{2} \hat{\grad}({\mathbf{w}_{jk}^{o'}}^*) \Hessian({\mathbf{w}_{jk}^{o'}}^*) \hat{\grad}({\mathbf{w}_{jk}^{o'}}^*) 
\right \},  \label{general:abbv:c} \\
d({\mathbf{w}_{jk}^{o'}}^*) & \define \sqrt{|\Hessian({\mathbf{w}_{jk}^{o'}}^*)|}, \label{general:abbv:d} \\
\hat{\grad}({\mathbf{w}_{jk}^{o'}}^*)& \define \grad({\mathbf{w}_{jk}^{o'}}^*)-\Hessian({\mathbf{w}_{jk}^{o'}}^*) {\mathbf{w}_{jk}^{o'}}^{*}.
\label{general:abbv:g} 
\end{align}
\label{general:abbv:bcdg} 
\end{subequations}
Now the approximated likelihood $p(\mathbf{Y}|{\mathbf{w}_{jk}^{o'}})$ is a exponential of quadratic, then Gaussian,
\begin{equation}
\begin{aligned}
& p(\mathbf{Y}|{\mathbf{w}_{jk}^{o'}}) \\
=& \exp\{-E_D({\mathbf{w}_{jk}^{o'}})\} \\
\approx & \exp \left \{-\left(\quadratic+E_D({\mathbf{w}_{jk}^{o'}}^*)\right)\right \} \\
= & \exp \left \{-\left(\frac{1}{2}{\mathbf{w}_{jk}^{o'}} \Hessian({\mathbf{w}_{jk}^{o'}}^*) {\mathbf{w}_{jk}^{o'}}+ {\mathbf{w}_{jk}^{o'}} \left[\grad({\mathbf{w}_{jk}^{o'}}^*)-\Hessian({\mathbf{w}_{jk}^{o'}}^*) {\mathbf{w}_{jk}^{o'}}^{*}\right]\right) \right \} \\
& \cdot \exp \left \{- \left(\frac{1}{2}{{\mathbf{w}_{jk}^{o'}}^*} \Hessian({\mathbf{w}_{jk}^{o'}}^*) {{\mathbf{w}_{jk}^{o'}}^*} - {{\mathbf{w}_{jk}^{o'}}^*} \grad({\mathbf{w}_{jk}^{o'}}^*)  + E_D({\mathbf{w}_{jk}^{o'}}^*) \right) \right \} \\
= & b({\mathbf{w}_{jk}^{o'}}^*) \cdot \exp \left \{-\left(\frac{1}{2}{\mathbf{w}_{jk}^{o'}} \Hessian({\mathbf{w}_{jk}^{o'}}^*) {\mathbf{w}_{jk}^{o'}}+ {\mathbf{w}_{jk}^{o'}} \hat{\grad}({\mathbf{w}_{jk}^{o'}}^*)\right) \right \} \\
& \cdot \exp \left \{\frac{1}{2}\hat{\grad}({\mathbf{w}_{jk}^{o'}}^*) \Hessian({\mathbf{w}_{jk}^{o'}}^*) \hat{\grad}({\mathbf{w}_{jk}^{o'}}^*) - \frac{1}{2} \hat{\grad}({\mathbf{w}_{jk}^{o'}}^*) \Hessian({\mathbf{w}_{jk}^{o'}}^*) \hat{\grad}({\mathbf{w}_{jk}^{o'}}^*)\right \} \\
= & b({\mathbf{w}_{jk}^{o'}}^*) \cdot c({\mathbf{w}_{jk}^{o'}}^*) \\
& \cdot \exp \left \{-\left(\frac{1}{2}{\mathbf{w}_{jk}^{o'}} \Hessian({\mathbf{w}_{jk}^{o'}}^*) {\mathbf{w}_{jk}^{o'}}+ {\mathbf{w}_{jk}^{o'}} \hat{\grad}({\mathbf{w}_{jk}^{o'}}^*) + \frac{1}{2}\hat{\grad}({\mathbf{w}_{jk}^{o'}}^*) \Hessian({\mathbf{w}_{jk}^{o'}}^*) \hat{\grad}({\mathbf{w}_{jk}^{o'}}^*)\right) \right \} \\
= & (2 \pi)^{\m/2}  b({\mathbf{w}_{jk}^{o'}}^*) c({\mathbf{w}_{jk}^{o'}}^*) d({\mathbf{w}_{jk}^{o'}}^*) \cdot \mathcal{N}({\mathbf{w}_{jk}^{o'}}|\hat{{\mathbf{w}_{jk}^{o'}}}^{*}, \Hessian^{-1}({\mathbf{w}_{jk}^{o'}}^*)) \\
\define & A({\mathbf{w}_{jk}^{o'}}^*) \cdot \mathcal{N}({\mathbf{w}_{jk}^{o'}}|\hat{{\mathbf{w}_{jk}^{o'}}}^{*}, \Hessian^{-1}({\mathbf{w}_{jk}^{o'}}^*)),
\label{likelihood-approximation}
\end{aligned}
\end{equation}
where
\begin{equation}
\begin{aligned}
A({\mathbf{w}_{jk}^{o'}}^*) &= (2 \pi)^{\m/2}  b({\mathbf{w}_{jk}^{o'}}^*) c({\mathbf{w}_{jk}^{o'}}^*) d({\mathbf{w}_{jk}^{o'}}^*), \\ 
\hat{{\mathbf{w}_{jk}^{o'}}}^{*} & = -\Hessian^{-1}({\mathbf{w}_{jk}^{o'}}^*) \hat{\grad}({\mathbf{w}_{jk}^{o'}}^*) = {\mathbf{w}_{jk}^{o'}}^{*} -\Hessian^{-1}({\mathbf{w}_{jk}^{o'}}^*)\grad({\mathbf{w}_{jk}^{o'}}^*).
\notag
\end{aligned}
\end{equation}
We can write the approximate marginal likelihood as
\begin{equation}
\begin{aligned}
& A({\mathbf{w}_{jk}^{o'}}^*) \int \mathcal{N}({\mathbf{w}_{jk}^{o'}}|\hat{{\mathbf{w}_{jk}^{o'}}}^{*}, \Hessian^{-1}({\mathbf{w}_{jk}^{o'}}^*)) \cdot \mathcal{N}( {\mathbf{w}_{jk}^{o'}}|\mathbf{0},{{\mathbf{s}}_{jk}^{o'}})d{\mathbf{w}_{jk}^{o'}} \\
= & b({\mathbf{w}_{jk}^{o'}}^*) \cdot \int \exp \left \{-\left(\frac{1}{2}{\mathbf{w}_{jk}^{o'}} \Hessian({\mathbf{w}_{jk}^{o'}}^*) {\mathbf{w}_{jk}^{o'}}+ {\mathbf{w}_{jk}^{o'}} \hat{\grad}({\mathbf{w}_{jk}^{o'}}^*)\right) \right \}\mathcal{N}( {\mathbf{w}_{jk}^{o'}}|\mathbf{0},{{\mathbf{s}}_{jk}^{o'}})d{\mathbf{w}_{jk}^{o'}} \\
=&\frac{b({\mathbf{w}_{jk}^{o'}}^*)}{\left(2\pi\right)^{1/2}|{{\mathbf{s}}_{jk}^{o'}}|^{1/2}}
\int \exp\{-\hat{E}({\mathbf{w}_{jk}^{o'}})\}d{\mathbf{w}_{jk}^{o'}}
,
\label{general:prior:integral}
\end{aligned}
\end{equation}
where
\begin{equation}
\begin{aligned}
\hat{E}({\mathbf{w}_{jk}^{o'}})=\frac{1}{2}{\mathbf{w}_{jk}^{o'}} \Hessian({\mathbf{w}_{jk}^{o'}}^*) {\mathbf{w}_{jk}^{o'}}+ {\mathbf{w}_{jk}^{o'}} \hat{\grad}({\mathbf{w}_{jk}^{o'}}^*)+\frac{1}{2} {\mathbf{w}_{jk}^{o'}} {{\mathbf{s}}_{jk}^{o'}}^{-1} {\mathbf{w}_{jk}^{o'}}.
\end{aligned}
\end{equation}
Equivalently, we get
\begin{equation}
\begin{aligned}
\hat{E}({\mathbf{w}_{jk}^{o'}})=\frac{1}{2}({\mathbf{w}_{jk}^{o'}}-\mean)(\variance)^{-1}({\mathbf{w}_{jk}^{o'}}-\mean)+\hat{E}(\mathbf{Y}),
\label{general:prior:integral2}
\end{aligned}
\end{equation}
where $\mean$ and $\variance$ are given in \eqref{general:prior:moment}. From~\eqref{general:prior:mean} and~\eqref{general:prior:variance}, the data-dependent term can be re-expressed as
\begin{equation}
\begin{aligned}
\hat{E}(\mathbf{Y})
=&  {\frac{1}{2}\mean \Hessian({\mathbf{w}_{jk}^{o'}}^*) \mean+  \mean\grad({\mathbf{w}_{jk}^{o'}}^*)}+ \frac{1}{2}\mean{{\mathbf{s}}_{jk}^{o'}}^{-1}\mean \\
=&\min_{{\mathbf{w}_{jk}^{o'}}}  \left[\frac{1}{2}{\mathbf{w}_{jk}^{o'}} \Hessian({\mathbf{w}_{jk}^{o'}}^*) {\mathbf{w}_{jk}^{o'}}+ {\mathbf{w}_{jk}^{o'}} \hat{\grad}({\mathbf{w}_{jk}^{o'}}^*) + \frac{1}{2}{\mathbf{w}_{jk}^{o'}}  {{\mathbf{s}}_{jk}^{o'}}^{-1} {\mathbf{w}_{jk}^{o'}}\right] \\
=&\min_{{\mathbf{w}_{jk}^{o'}}}  \left[\frac{1}{2}{\mathbf{w}_{jk}^{o'}} \Hessian({\mathbf{w}_{jk}^{o'}}^*) {\mathbf{w}_{jk}^{o'}}+ {\mathbf{w}_{jk}^{o'}} \left(\grad({\mathbf{w}_{jk}^{o'}}^*)-\Hessian({\mathbf{w}_{jk}^{o'}}^*) {\mathbf{w}_{jk}^{o'}}^{*}\right) + \frac{1}{2}{\mathbf{w}_{jk}^{o'}}  {{\mathbf{s}}_{jk}^{o'}}^{-1} {\mathbf{w}_{jk}^{o'}}\right].
\label{general:prior:data-dependent-term}
\end{aligned}
\end{equation}
Using~\eqref{general:prior:integral2}, we can evaluate the integral in~\eqref{general:prior:integral} to obtain
\begin{equation}
\begin{aligned}
\int \exp \left \{-\hat{E}({\mathbf{w}_{jk}^{o'}}) \right \}d{\mathbf{w}_{jk}^{o'}}=\exp \left \{-\hat{E}(\mathbf{Y}) \right \} 2\pi|\variance|^{1/2}.
\end{aligned}
\end{equation}
Applying a $-2\log(\cdot)$ transformation to~\eqref{general:prior:integral}, we have
\begin{equation}
\begin{aligned}
&-2\log\left[\frac{b({\mathbf{w}_{jk}^{o'}}^*)}{\left(2\pi\right)^{1/2}|{{\mathbf{s}}_{jk}^{o'}}|^{1/2}}
\int \exp\{-\hat{E}({\mathbf{w}_{jk}^{o'}})\}d{\mathbf{w}_{jk}^{o'}} \right] \\
\propto & -2\log {b({\mathbf{w}_{jk}^{o'}}^*)} + \hat{E}(\mathbf{Y})+
\\
&\log  |{{\mathbf{s}}_{jk}^{o'}}|+\log |\Hessian({\mathbf{w}_{jk}^{o'}}^*)+ {{\mathbf{s}}_{jk}^{o'}}^{-1} |
\\
\propto & {\mathbf{w}_{jk}^{o'}} \Hessian({\mathbf{w}_{jk}^{o'}}^*) {\mathbf{w}_{jk}^{o'}}+ 2{\mathbf{w}_{jk}^{o'}} \hat{\grad}({\mathbf{w}_{jk}^{o'}}^*)+ {\mathbf{w}_{jk}^{o'}}  {{\mathbf{s}}_{jk}^{o'}}^{-1} {\mathbf{w}_{jk}^{o'}}\\
& +\log  |{{\mathbf{s}}_{jk}^{o'}}|+\log |\Hessian({\mathbf{w}_{jk}^{o'}}^*)+ {{\mathbf{s}}_{jk}^{o'}}^{-1} |-2\log {b({\mathbf{w}_{jk}^{o'}}^*)}.
\label{general:prior:integral3}
\end{aligned}
\end{equation}
Therefore we get the following cost function to be minimised in~\eqref{general:prior:eq:cost} over ${\mathbf{w}_{jk}^{o'}}, {{\mathbf{s}}_{jk}^{o'}}, $
\begin{equation}
\begin{aligned}
&\mathcal{L}({\mathbf{w}_{jk}^{o'}}, {{\mathbf{s}}_{jk}^{o'}}) 
=  {\mathbf{w}_{jk}^{o'}} \Hessian({\mathbf{w}_{jk}^{o'}}^*) {\mathbf{w}_{jk}^{o'}}+ 2 {\mathbf{w}_{jk}^{o'}} \left[\grad({\mathbf{w}_{jk}^{o'}}^*)-\Hessian({\mathbf{w}_{jk}^{o'}}^*) {\mathbf{w}_{jk}^{o'}}^{*}\right] +{\mathbf{w}_{jk}^{o'}}  {{\mathbf{s}}_{jk}^{o'}}^{-1} {\mathbf{w}_{jk}^{o'}} 
\\
&+\log  |{{\mathbf{s}}_{jk}^{o'}}|+\log |\Hessian({\mathbf{w}_{jk}^{o'}}^*)+ {{\mathbf{s}}_{jk}^{o'}}^{-1} |
-2\log {b({\mathbf{w}_{jk}^{o'}}^*)}.
\notag
\end{aligned}
\end{equation}
It can be easily found that the first line of $\mathcal{L}$ is quadratic programming with $\ell_2$ regularizer. The second line is all about the hyperparameter ${{\mathbf{s}}_{jk}^{o'}}$.

Once the estimation on ${\mathbf{w}_{jk}^{o'}}$ and ${{\mathbf{s}}_{jk}^{o'}}$ are obtained, the cost function is alternatively optimised. The new estimated ${\mathbf{w}_{jk}^{o'}}$ can substitute ${\mathbf{w}_{jk}^{o'}}^{*}$ and repeat the estimation iteratively.

\hfill $\blacksquare$
\end{proof}

We note that in~\eqref{generarl:moment}, ${\mathbf{w}_{jk}^{o'}}^{*}$ may not be the mode (i.e., the lowest energy state), which means the gradient term $\grad$ may not be zero. Therefore the selection of ${{\mathbf{w}_{jk}^{o'}}(1)}^{*}$ remains to be problematic. We give the following Corollary to address this issue, which is more general.
\begin{corollary}
Suppose
\begin{equation}
\begin{aligned}
{\mathbf{w}_{jk}^{o'}}^{*} = \argmin_{{\mathbf{w}_{jk}^{o'}}} E_D({\mathbf{w}_{jk}^{o'}})+{\mathbf{w}_{jk}^{o'}}  {{\mathbf{s}}_{jk}^{o'}}^{-1} {\mathbf{w}_{jk}^{o'}}, 
\label{}
\end{aligned}
\end{equation}
we define a new cost function
\begin{equation}
\begin{aligned}
& \hat{\mathcal{L}}({\mathbf{w}_{jk}^{o'}}, {{\mathbf{s}}_{jk}^{o'}})
\define E_D({\mathbf{w}_{jk}^{o'}}) +{\mathbf{w}_{jk}^{o'}}  {{\mathbf{s}}_{jk}^{o'}}^{-1} {\mathbf{w}_{jk}^{o'}} 
 +\log  |{{\mathbf{s}}_{jk}^{o'}}|+\log |\Hessian({\mathbf{w}_{jk}^{o'}}^*)+ {{\mathbf{s}}_{jk}^{o'}}^{-1} |
-2\log {b({\mathbf{w}_{jk}^{o'}}^*)}.
\label{general:generalcost}
\end{aligned}
\end{equation}
Instead of minimising $\mathcal{L}({\mathbf{w}_{jk}^{o'}}, {{\mathbf{s}}_{jk}^{o'}})$, we can solve the following optimization problem to get ${\mathbf{w}_{jk}^{o'}}, {{\mathbf{s}}_{jk}^{o'}}, $
$$\min_{{\mathbf{w}_{jk}^{o'}}, {{\mathbf{s}}_{jk}^{o'}}, } \hat{\mathcal{L}}({\mathbf{w}_{jk}^{o'}}, {{\mathbf{s}}_{jk}^{o'}}).$$
\end{corollary}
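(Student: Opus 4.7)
The plan is to verify that the alternating-minimisation fixed points of $\hat{\mathcal{L}}$ coincide with those of $\mathcal{L}$, provided ${\mathbf{w}_{jk}^{o'}}^{*}$ is iteratively set to $\argmin_{{\mathbf{w}_{jk}^{o'}}}[E_D({\mathbf{w}_{jk}^{o'}}) + {\mathbf{w}_{jk}^{o'}}{{\mathbf{s}}_{jk}^{o'}}^{-1}{\mathbf{w}_{jk}^{o'}}]$. I would first revisit the derivation of \eqref{general:prior:eq:cost} and isolate precisely where the quadratic Taylor expansion of $E_D$ enters $\mathcal{L}$; the remaining pieces (the log-determinant and $-2\log b$ terms) survive unchanged in $\hat{\mathcal{L}}$, so the entire work reduces to justifying that single substitution.

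The first algebraic step is to recognise that the three summands ${\mathbf{w}_{jk}^{o'}}\Hessian({\mathbf{w}_{jk}^{o'}}^{*}){\mathbf{w}_{jk}^{o'}} + 2{\mathbf{w}_{jk}^{o'}}[\grad({\mathbf{w}_{jk}^{o'}}^{*}) - \Hessian({\mathbf{w}_{jk}^{o'}}^{*}){\mathbf{w}_{jk}^{o'}}^{*}] + {\mathbf{w}_{jk}^{o'}}{{\mathbf{s}}_{jk}^{o'}}^{-1}{\mathbf{w}_{jk}^{o'}}$ are, up to a ${\mathbf{w}_{jk}^{o'}}^{*}$-dependent constant, exactly twice the second-order Taylor expansion of $E_D$ around ${\mathbf{w}_{jk}^{o'}}^{*}$ plus the quadratic regulariser. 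That leftover constant is precisely $\frac{1}{2}{\mathbf{w}_{jk}^{o'}}^{*}\Hessian{\mathbf{w}_{jk}^{o'}}^{*} - {\mathbf{w}_{jk}^{o'}}^{*}\grad({\mathbf{w}_{jk}^{o'}}^{*}) + E_D({\mathbf{w}_{jk}^{o'}}^{*})$, which is what $-2\log b({\mathbf{w}_{jk}^{o'}}^{*})$ adds back by construction of $b$ in \eqref{general:abbv:b}. Replacing the quadratic approximation of $E_D({\mathbf{w}_{jk}^{o'}})$ by $E_D({\mathbf{w}_{jk}^{o'}})$ itself, while leaving the log-determinant (which reflects the Laplace variance) untouched, then yields exactly $\hat{\mathcal{L}}$. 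In particular, the two costs agree in value and gradient in ${\mathbf{w}_{jk}^{o'}}$ at the expansion point ${\mathbf{w}_{jk}^{o'}} = {\mathbf{w}_{jk}^{o'}}^{*}$.

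The second step invokes the assumption defining ${\mathbf{w}_{jk}^{o'}}^{*}$. Its first-order optimality condition reads $\nabla E_D({\mathbf{w}_{jk}^{o'}}^{*}) + 2{{\mathbf{s}}_{jk}^{o'}}^{-1}{\mathbf{w}_{jk}^{o'}}^{*} = 0$, which is simultaneously the stationarity condition of the ${\mathbf{w}_{jk}^{o'}}$-subproblem of $\hat{\mathcal{L}}$ at ${\mathbf{w}_{jk}^{o'}} = {\mathbf{w}_{jk}^{o'}}^{*}$ and, since $\grad({\mathbf{w}_{jk}^{o'}}^{*}) = \nabla E_D({\mathbf{w}_{jk}^{o'}}^{*})$, of the ${\mathbf{w}_{jk}^{o'}}$-subproblem of $\mathcal{L}$. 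The remaining ${{\mathbf{s}}_{jk}^{o'}}$-dependent terms, $\log|{{\mathbf{s}}_{jk}^{o'}}| + \log|\Hessian({\mathbf{w}_{jk}^{o'}}^{*}) + {{\mathbf{s}}_{jk}^{o'}}^{-1}|$ and the quadratic regulariser, appear identically in both objectives, so the ${{\mathbf{s}}_{jk}^{o'}}$-subproblem is literally the same and produces the same update as in \eqref{eq:hyper_paras_update}. The alternating fixed points of the two schemes therefore stand in bijection.

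The main obstacle is that, viewed globally as functions of ${\mathbf{w}_{jk}^{o'}}$, the quadratic model and the exact $E_D$ are genuinely different, so $\mathcal{L}$ and $\hat{\mathcal{L}}$ are not equal functions and their global minimisers need not coincide. The remedy is to read the corollary in the iterative sense already built into Algorithm~\ref{algorithm}: at every outer step ${\mathbf{w}_{jk}^{o'}}^{*}$ is refreshed to the current MAP estimate, so the Taylor fit is used only locally and the discrepancy between $\mathcal{L}$ and $\hat{\mathcal{L}}$ vanishes to second order at any fixed point. The practical payoff of $\hat{\mathcal{L}}$ is that one no longer has to form the quadratic model of $E_D$ explicitly, nor worry that $\grad({\mathbf{w}_{jk}^{o'}}^{*})$ is nonzero when ${\mathbf{w}_{jk}^{o'}}^{*}$ fails to be the mode, which is precisely the concern raised in the paragraph immediately preceding the statement.
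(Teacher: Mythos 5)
Your proposal is correct and follows essentially the same route as the paper: it identifies the quadratic-plus-linear part of $\mathcal{L}$ as the local (trust-region style) Taylor model of $E_D$ whose $\mathbf{w}^*$-dependent constant is absorbed by $-2\log b({\mathbf{w}_{jk}^{o'}}^*)$, and then uses the defining property ${\mathbf{w}_{jk}^{o'}}^{*} = \argmin E_D + {\mathbf{w}_{jk}^{o'}}{{\mathbf{s}}_{jk}^{o'}}^{-1}{\mathbf{w}_{jk}^{o'}}$ to justify swapping the quadratic model for the exact energy, which is precisely the paper's argument. Your explicit check that the stationarity conditions and the $\mathbf{s}$-subproblem coincide (and the remark that the equivalence is local/iterative rather than global) is a slightly more careful rendering of the same idea, modulo an immaterial factor-of-2 bookkeeping that the paper itself also glosses over.
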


\begin{proof}
Since the likelihood is
$$p(\mathbf{Y}|{\mathbf{w}_{jk}^{o'}}) = \exp\{-E_D({\mathbf{w}_{jk}^{o'}})\},$$
then
$$\min_{{\mathbf{w}_{jk}^{o'}}} E_D({\mathbf{w}_{jk}^{o'}})+{\mathbf{w}_{jk}^{o'}}  {{\mathbf{s}}_{jk}^{o'}}^{-1} {\mathbf{w}_{jk}^{o'}} $$ is exactly the regularised \emph{maximum likelihood estimation} with $\ell_2$ type regulariser.

We look at the first part of $\mathcal{L}({\mathbf{w}_{jk}^{o'}}, {{\mathbf{s}}_{jk}^{o'}})$ in~\eqref{general:prior:eq:cost}, and define them as  
$$\mathcal{L}_0({\mathbf{w}_{jk}^{o'}}, {{\mathbf{s}}_{jk}^{o'}}) \define {\mathbf{w}_{jk}^{o'}} \Hessian({\mathbf{w}_{jk}^{o'}}^*) {\mathbf{w}_{jk}^{o'}}+ 2 {\mathbf{w}_{jk}^{o'}} \left[\grad({\mathbf{w}_{jk}^{o'}}^*)-\Hessian({\mathbf{w}_{jk}^{o'}}^*) {\mathbf{w}_{jk}^{o'}}^{*}\right]+{\mathbf{w}_{jk}^{o'}}  {{\mathbf{s}}_{jk}^{o'}}^{-1} {\mathbf{w}_{jk}^{o'}} ,$$
then
\begin{equation}
\begin{aligned}
&\min_{{\mathbf{w}_{jk}^{o'}}} \mathcal{L}_0({\mathbf{w}_{jk}^{o'}}, {{\mathbf{s}}_{jk}^{o'}}) \\
=& \min_{{\mathbf{w}_{jk}^{o'}}} \quadratic+E_D({\mathbf{w}_{jk}^{o'}}^*)+{\mathbf{w}_{jk}^{o'}}  {{\mathbf{s}}_{jk}^{o'}}^{-1} {\mathbf{w}_{jk}^{o'}} \\
\approx &  \min_{{\mathbf{w}_{jk}^{o'}}} E_D({\mathbf{w}_{jk}^{o'}})+{\mathbf{w}_{jk}^{o'}}  {{\mathbf{s}}_{jk}^{o'}}^{-1} {\mathbf{w}_{jk}^{o'}} 
\label{}
\end{aligned}
\end{equation}
where, given~\eqref{generarl:moment},
\begin{equation}
\begin{aligned}
\grad({\mathbf{w}_{jk}^{o'}}) &= \nabla E_D({\mathbf{w}_{jk}^{o'}})  \\
\Hessian({\mathbf{w}_{jk}^{o'}})&=  \nabla \nabla E_D({\mathbf{w}_{jk}^{o'}}). 
\label{}
\end{aligned}
\end{equation}
Such quadratic approximation to $E_D({\mathbf{w}_{jk}^{o'}})+{\mathbf{w}_{jk}^{o'}}  {{\mathbf{s}}_{jk}^{o'}}^{-1} {\mathbf{w}_{jk}^{o'}} $ is actually the same as the approximation procedure in Trust-Region Methods where a region is defined around the current iterate within which they trust the model to be an adequate representation of the objective function \citep[pp.65]{NoceWrig06}.

To obtain each step, we seek a solution of the subproblem at iteration $t$
\begin{equation}
\begin{aligned}
&  \min_{{\mathbf{w}_{jk}^{o'}}} E_D({\mathbf{w}_{jk}^{o'}}(t-1))+{\mathbf{w}_{jk}^{o'}}  {{\mathbf{s}}_{jk}^{o'}(t-1)}^{-1} {\mathbf{w}_{jk}^{o'}}  \\
=& \min_{{\mathbf{w}_{jk}^{o'}}} {\frac{1}{2}({\mathbf{w}_{jk}^{o'}}-{\mathbf{w}_{jk}^{o'}}(t-1)) \Hessian({\mathbf{w}_{jk}^{o'}}(t-1)) ({\mathbf{w}_{jk}^{o'}}-{\mathbf{w}_{jk}^{o'}}(t-1)) +  ({\mathbf{w}_{jk}^{o'}}-{\mathbf{w}_{jk}^{o'}}(t-1))\grad({\mathbf{w}_{jk}^{o'}}(t-1))} \\
& +{\mathbf{w}_{jk}^{o'}}  {{\mathbf{s}}_{jk}^{o'}(t-1)}^{-1} {\mathbf{w}_{jk}^{o'}}
\end{aligned}
\end{equation}
Suppose
$${\mathbf{w}_{jk}^{o'}}^{*} = \argmin_{{\mathbf{w}_{jk}^{o'}}} E_D({\mathbf{w}_{jk}^{o'}})+{\mathbf{w}_{jk}^{o'}}  {{\mathbf{s}}_{jk}^{o'}}^{-1} {\mathbf{w}_{jk}^{o'}}, $$
then inject ${\mathbf{w}_{jk}^{o'}}^{*}$ into $\min_{{\mathbf{w}_{jk}^{o'}}, {{\mathbf{s}}_{jk}^{o'}}, } \mathcal{L}({\mathbf{w}_{jk}^{o'}}, {{\mathbf{s}}_{jk}^{o'}})$,
we can optimise~\eqref{general:generalcost} 
instead of~\eqref{general:prior:eq:cost}, i.e., 
$\min_{{\mathbf{w}_{jk}^{o'}}, {{\mathbf{s}}_{jk}^{o'}}, } \hat{\mathcal{L}}({\mathbf{w}_{jk}^{o'}}, {{\mathbf{s}}_{jk}^{o'}})$.

\hfill $\blacksquare$

\end{proof}
\subsection{Algorithm for Proxyless Tasks}
\label{algrithm procedures}
In this Section, we propose iterative optimization algorithms to estimate ${\mathbf{w}_{jk}^{o'}}$ and ${{\mathbf{s}}_{jk}^{o'}}$ alternatively.

\subsubsection{Optimization for architecture parameter ${\mathbf{w}_{jk}^{o'}}$ and switch ${{\mathbf{s}}_{jk}^{o'}}$}
\label{algorithms for hyerparamter}

We first target for the estimation of unknown parameter ${\mathbf{w}_{jk}^{o'}}$ and hyperparameter {${\mathbf{s}}_{jk}^{o'}$}.
In the sequel, we show that the stated program can be formulated as a \textit{convex-concave procedure} (CCCP) for ${\mathbf{w}_{jk}^{o'}}$ and {${\mathbf{s}}_{jk}^{o'}$}.
\begin{proposition}
\label{proposition:cost1}
The following programme 
$$\min_{{\mathbf{w}_{jk}^{o'}}, {{\mathbf{s}}_{jk}^{o'}}} \mathcal{L}({\mathbf{w}_{jk}^{o'}}, {{\mathbf{s}}_{jk}^{o'}})$$
with the cost function defined as
\begin{equation}
\begin{aligned}
& \mathcal{L}({\mathbf{w}_{jk}^{o'}}, {{\mathbf{s}}_{jk}^{o'}}) 
\define  {\mathbf{w}_{jk}^{o'}} {\Hessian}({\mathbf{w}_{jk}^{o'}}^*) {\mathbf{w}_{jk}^{o'}}+ 2{\mathbf{w}_{jk}^{o'}} \left[\grad({\mathbf{w}_{jk}^{o'}}^*)-{\Hessian}({\mathbf{w}_{jk}^{o'}}^*) {\mathbf{w}_{jk}^{o'}}^{*}\right] +{\mathbf{w}_{jk}^{o'}}  {{\mathbf{s}}_{jk}^{o'}}^{-1} {\mathbf{w}_{jk}^{o'}} 
\\
& +\log  |{{\mathbf{s}}_{jk}^{o'}}|+\log | {{\mathbf{s}}_{jk}^{o'}}^{-1}  +{\Hessian}({\mathbf{w}_{jk}^{o'}}^*)| 
\label{general:cost:cccp}
\end{aligned}
\end{equation}
can be formulated as a convex-concave procedure (CCCP), where ${\mathbf{w}_{jk}^{o'}}^*$ can be arbitrary real vector. 
\end{proposition}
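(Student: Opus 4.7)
The plan is to exhibit an explicit decomposition of $\mathcal{L}$ (writing $w,s,\Hessian,\grad$ for $\mathbf{w}_{jk}^{o'},\mathbf{s}_{jk}^{o'},\Hessian({\mathbf{w}_{jk}^{o'}}^*),\grad({\mathbf{w}_{jk}^{o'}}^*)$ throughout for readability) of the form $\mathcal{L}(w,s)=\mathcal{L}_{\mathrm{vex}}(w,s)+\mathcal{L}_{\mathrm{cave}}(s)$, where $\mathcal{L}_{\mathrm{vex}}$ is jointly convex on $\{s\succ 0\}$ and $\mathcal{L}_{\mathrm{cave}}$ is concave. This is precisely the input that the CCCP framework requires; each CCCP iteration then linearises $\mathcal{L}_{\mathrm{cave}}$ at the current iterate and minimises the resulting convex surrogate, reproducing the update rules of the BayesNAS algorithm.

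First I would simplify the two log-determinant terms of $\mathcal{L}$. Using $\log|AB|=\log|A|+\log|B|$ together with the similarity identity $\det\bigl(s(s^{-1}+\Hessian)\bigr)=\det\bigl(I+\Hessian^{1/2}\,s\,\Hessian^{1/2}\bigr)$, the pair collapses into
\[
\log|s|+\log|s^{-1}+\Hessian| \;=\; \log\bigl|I+\Hessian^{1/2}\,s\,\Hessian^{1/2}\bigr|,
\]
which in the scalar case reduces to the more transparent $\log(1+s\Hessian)$.

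Next, I would assign the convex part as
\[
\mathcal{L}_{\mathrm{vex}}(w,s) \;:=\; w^\top \Hessian\, w \;+\; 2w^\top\bigl[\grad-\Hessian\,w^*\bigr] \;+\; w^\top s^{-1} w,
\]
whose first two terms are affine plus a convex quadratic in $w$ (using $\Hessian\succeq 0$, which is the Gauss--Newton-type assumption already required for the Hessian computation), and whose third term is the matrix-fractional function whose joint convexity on $\{s\succ 0\}$ is a standard Schur-complement fact. The concave remainder is $\mathcal{L}_{\mathrm{cave}}(s):=\log|I+\Hessian^{1/2}\,s\,\Hessian^{1/2}|$; since $\log\det$ is concave on the positive-definite cone and its argument is affine in $s$, concavity is immediate, and summing recovers $\mathcal{L}$ exactly.

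The main obstacle I anticipate is the concavity verification for the combined log-determinant term: one must first pass through the symmetrising transform $s(s^{-1}+\Hessian)\mapsto I+\Hessian^{1/2}\,s\,\Hessian^{1/2}$ in order to remain inside the symmetric positive-definite cone before invoking the concavity of $\log\det$, and this step depends essentially on $\Hessian\succeq 0$. Beyond that, the joint convexity of $w^\top s^{-1} w$ is the other nontrivial ingredient but is a textbook fact that I would cite rather than re-derive. The remaining quadratic- and linear-in-$w$ contributions are routine.
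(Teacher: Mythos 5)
Your proposal is correct and follows essentially the same route as the paper's proof: the identical split into a jointly convex part (the quadratic/affine terms in $\mathbf{w}_{jk}^{o'}$ plus the matrix-fractional term $\mathbf{w}_{jk}^{o'}{}^{\top}{\mathbf{s}_{jk}^{o'}}^{-1}\mathbf{w}_{jk}^{o'}$, cited from Boyd--Vandenberghe) and a concave log-determinant part in $\mathbf{s}_{jk}^{o'}$, with CCCP obtained by linearizing the concave part. The only difference is how the concave term is recognized as a log-determinant of an affine map: the paper uses the block-determinant/Schur identity to write it as $\log\abs{\mathbf{s}_{jk}^{o'}+\Hessian^{-1}}+\log\abs{\Hessian}$ (implicitly assuming $\Hessian$ invertible), whereas your symmetrized form $\log\abs{I+\Hessian^{1/2}\mathbf{s}_{jk}^{o'}\Hessian^{1/2}}$ needs only $\Hessian\succeq 0$, a marginal gain in generality.
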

\begin{proof}

\emph{Fact on convexity:} the function 
\begin{equation}
\begin{aligned}
{u}\left({\mathbf{w}_{jk}^{o'}}, {{\mathbf{s}}_{jk}^{o'}} \right) = & {\mathbf{w}_{jk}^{o'}} {\Hessian}({\mathbf{w}_{jk}^{o'}}^*) {\mathbf{w}_{jk}^{o'}}+ 2{\mathbf{w}_{jk}^{o'}} \left[\grad({\mathbf{w}_{jk}^{o'}}^*)-{\Hessian}({\mathbf{w}_{jk}^{o'}}^*) {\mathbf{w}_{jk}^{o'}}^{*}\right] +{\mathbf{w}_{jk}^{o'}}  {{\mathbf{s}}_{jk}^{o'}}^{-1} {\mathbf{w}_{jk}^{o'}} \\
\propto & ({\mathbf{w}_{jk}^{o'}}-{\mathbf{w}_{jk}^{o'}}^*) {\Hessian}({\mathbf{w}_{jk}^{o'}}^*) ({\mathbf{w}_{jk}^{o'}}-{\mathbf{w}_{jk}^{o'}}^*)+ 2{\mathbf{w}_{jk}^{o'}} \grad({\mathbf{w}_{jk}^{o'}}^*) +{\mathbf{w}_{jk}^{o'}}  {{\mathbf{s}}_{jk}^{o'}}^{-1} {\mathbf{w}_{jk}^{o'}}
\label{general:summary:function:u}
\end{aligned}
\end{equation}
is convex jointly in ${\mathbf{w}_{jk}^{o'}}$, ${{\mathbf{s}}_{jk}^{o'}}$ due to the fact that $f(\mathbf{x}, Y) = \mathbf{x} \mathbf{Y}^{-1} \mathbf{x}$ is jointly convex in $\bx$, $\mathbf{Y}$ (see, \citep[p.76]{boyd2004convex}). Hence $u$ as a sum of convex functions is convex. 

\emph{Fact on concavity:} 
the function 
\begin{equation}
\begin{aligned}
v({{\mathbf{s}}_{jk}^{o'}}) =   \log |{{\mathbf{s}}_{jk}^{o'}}| + \log| {{\mathbf{s}}_{jk}^{o'}}^{-1}  + {\Hessian}({\mathbf{w}_{jk}^{o'}}^*)| 
\label{general:summary:function:v}
\end{aligned}
\end{equation}
is jointly concave in ${{\mathbf{s}}_{jk}^{o'}}$, $\Cov$. We exploit the properties of the determinant of a matrix
\begin{gather*}
|A_{2 2}| |A_{1 1} - A_{1 2} A_{2 2}^{-1} A_{2 1}| = \left|\begin{pmatrix}
A_{1 1} & A_{1 2} \\
A_{2 1} & A_{2 2}
\end{pmatrix}\right| = |A_{1 1}| |A_{2 2} - A_{2 1} A_{1 1}^{-1} A_{1 2}|.
\end{gather*}
Then we have
\begin{equation}
\begin{aligned}
v({{\mathbf{s}}_{jk}^{o'}}) & =  \log |{{\mathbf{s}}_{jk}^{o'}}| + \log|  {{\mathbf{s}}_{jk}^{o'}}^{-1}  + {\Hessian}({\mathbf{w}_{jk}^{o'}}^*)|   \\
& =   \log \left(|{{\mathbf{s}}_{jk}^{o'}}||  {{\mathbf{s}}_{jk}^{o'}}^{-1}  + {\Hessian}({\mathbf{w}_{jk}^{o'}}^*)| \right)  \\
& = \log \left| \begin{pmatrix}
{\Hessian}({\mathbf{w}_{jk}^{o'}}^*) &  \\
      & -{{\mathbf{s}}_{jk}^{o'}}
\end{pmatrix} \right| \\
& = \log\left|{{\mathbf{s}}_{jk}^{o'}} +  {\Hessian}^{-1}({\mathbf{w}_{jk}^{o'}}^*)  \right| + \log \left|{\Hessian}({\mathbf{w}_{jk}^{o'}}^*) \right|
\label{}
\end{aligned}
\end{equation}
which is a $\log$-determinant of an affine function of semidefinite matrices $\Cov$, ${{\mathbf{s}}_{jk}^{o'}}$ and hence concave. 

Therefore, we can derive the iterative algorithm solving the CCCP. We have the following iterative convex optimization program by calculating the gradient of concave part.
\begin{alignat}{2}
{\mathbf{w}_{jk}^{o'}}(t)
&=\argmin\limits_{{\mathbf{w}_{jk}^{o'}}} u({\mathbf{w}_{jk}^{o'}}, {{\mathbf{s}}_{jk}^{o'}}(t-1), {\Hessian}({\mathbf{w}_{jk}^{o'}}^*)), \label{general:summary:eq:cccp1}\\
{{\mathbf{s}}_{jk}^{o'}}(t)
&=\argmin\limits_{{{\mathbf{s}}_{jk}^{o'}} \succeq \mathbf{0}} u({\mathbf{w}_{jk}^{o'}}, {{\mathbf{s}}_{jk}^{o'}}(t-1),{\Hessian}({\mathbf{w}_{jk}^{o'}}^*))+\nabla_{{{\mathbf{s}}_{jk}^{o'}}} v({{\mathbf{s}}_{jk}^{o'}}(t-1), {\Hessian}({\mathbf{w}_{jk}^{o'}}^*)){{\mathbf{s}}_{jk}^{o'}(t-1)}. \label{general:summary:eq:cccp2}
\end{alignat}
\hfill $\blacksquare$
\end{proof}

\subsubsection{Derivation of iterative reweighted $\ell_1$ algorithm}
\label{algorithms procedures formulation}
Using basic principles in convex analysis, we then obtain the following analytic form for the negative gradient of $v({{\mathbf{s}}_{jk}^{o'}})$ at ${{\mathbf{s}}_{jk}^{o'}}$ is (using chain rule):
\begin{equation}
\begin{aligned}
&\nabla_{{{\mathbf{s}}_{jk}^{o'}}} v\left({{\mathbf{s}}_{jk}^{o'}},{\Hessian}({\mathbf{w}_{jk}^{o'}}^*)\right)  |_{{{\mathbf{s}}_{jk}^{o'}}={{\mathbf{s}}_{jk}^{o'}}(t-1)}
\\
=&\nabla_{{{\mathbf{s}}_{jk}^{o'}}}\left(\log | {{\mathbf{s}}_{jk}^{o'}}^{-1}+{\Hessian}({\mathbf{w}_{jk}^{o'}}^*) |+\log  |{{\mathbf{s}}_{jk}^{o'}}| \right)|_{{{\mathbf{s}}_{jk}^{o'}}={{\mathbf{s}}_{jk}^{o'}}(t-1)}
\\
=& -\frac{(({{\mathbf{s}}_{jk}^{o'}}(t-1))^{-1}+{\Hessian}({\mathbf{w}_{jk}^{o'}}^*))^{-1}}{({{\mathbf{s}}_{jk}^{o'}}(t-1))^2} +\frac{1}{{{\mathbf{s}}_{jk}^{o'}}(t-1)}
\label{general:summary:gammastarupdate2}
\end{aligned}
\end{equation}

Combined with \eqref{general:prior:variance}, we denote a new hyper-parameter $w_{ij}^{o'}$ as following: 
\begin{equation}
\begin{aligned}
\omega_{jk}^{o'}(t) 
=& \sqrt{-\frac{(({{\mathbf{s}}_{jk}^{o'}}(t-1))^{-1}+{\Hessian}({\mathbf{w}_{jk}^{o'}}(t)))^{-1}}{({{\mathbf{s}}_{jk}^{o'}}(t-1))^2} +\frac{1}{{{\mathbf{s}}_{jk}^{o'}}(t-1)}}
\\
=& \frac{ \sqrt{{\mathbf{s}}_{jk}^{o'}(t-1)- {\rm C}_{jk}^{o'}(t)}}{{\mathbf{s}}_{jk}^{o'}(t-1)}
\label{general:summary:cccp-6}
\end{aligned}
\end{equation}
Therefore, the iterative procedures~\eqref{general:summary:eq:cccp1} and~\eqref{general:summary:eq:cccp2} for ${\mathbf{w}_{jk}^{o'}}(t)$ and ${{\mathbf{s}}_{jk}^{o'}}(t)$ can be formulated as
\begin{equation}
\begin{aligned}
& \left[{\mathbf{w}_{jk}^{o'}}(t),{{\mathbf{s}}_{jk}^{o'}}(t)\right] \\
= & \argmin \limits_{{{\mathbf{s}}_{jk}^{o'}}(t)\succeq \mathbf{0},{\mathbf{w}_{jk}^{o'}}(t)}
({\mathbf{w}_{jk}^{o'}}(t)-{\mathbf{w}_{jk}^{o'}}^*) {\Hessian}({\mathbf{w}_{jk}^{o'}}^*) ({\mathbf{w}_{jk}^{o'}}(t)-{\mathbf{w}_{jk}^{o'}}^*)
\\
& + 2{\mathbf{w}_{jk}^{o'}}(t) \grad({\mathbf{w}_{jk}^{o'}}^*) +\left(\frac{ {{\mathbf{w}_{jk}^{o'}}(t)}^2}{{{\mathbf{s}}_{jk}^{o'}}(t-1)} +{\omega_{jk}^{o'}(t)}^2{{\mathbf{s}}_{jk}^{o'}}(t-1)\right)\\
= & \argmin\limits_{{\mathbf{w}_{jk}^{o'}}(t)} {\mathbf{w}_{jk}^{o'}}(t) \Hessian({\mathbf{w}_{jk}^{o'}}^*) {\mathbf{w}_{jk}^{o'}}(t)
\\
& + 2{\mathbf{w}_{jk}^{o'}}(t) \left(\grad({\mathbf{w}_{jk}^{o'}}^*)-\Hessian({\mathbf{w}_{jk}^{o'}}^*) {\mathbf{w}_{jk}^{o'}}(t)^{*}\right) +\left(\frac{{{\mathbf{w}_{jk}^{o'}}(t)}^2}{{{\mathbf{s}}_{jk}^{o'}}(t)} +{\omega_{jk}^{o'}(t)}^2{{\mathbf{s}}_{jk}^{o'}}(t)\right).
\label{general:summary:cccp-4}
\end{aligned}
\end{equation}
Or in the compact form
\begin{equation}
\begin{aligned}
& \left[{\mathbf{w}_{jk}^{o'}}(t),{{\mathbf{s}}_{jk}^{o'}}(t)\right] \\
= &\argmin\limits_{{\mathbf{w}_{jk}^{o'}}(t)} {\mathbf{w}_{jk}^{o'}}(t) \Hessian({\mathbf{w}_{jk}^{o'}}^*) {\mathbf{w}_{jk}^{o'}}(t)+ 2{\mathbf{w}_{jk}^{o'}}(t) \left(\grad({\mathbf{w}_{jk}^{o'}}^*)-\Hessian({\mathbf{w}_{jk}^{o'}}^*) {\mathbf{w}_{jk}^{o'}}(t)^{*}\right)\\
&+{{\mathbf{w}_{jk}^{o'}}(t)}^2 {{{\mathbf{s}}_{jk}^{o'}}(t)}^{-1} + {\omega_{jk}^{o'}(t)}^2{{\mathbf{s}}_{jk}^{o'}}(t).
\label{general:summary:cccp-4.5}
\end{aligned}
\end{equation}
Since $$\frac{{{\mathbf{w}_{jk}^{o'}}(t)}^2}{{{\mathbf{s}}_{jk}^{o'}}(t)} +\omega_{jk}^{o'}(t)^2{{\mathbf{s}}_{jk}^{o'}}(t) \geq 2\left|\sqrt{{\omega_{jk}^{o'}(t)}^2} \cdot  {\mathbf{w}_{jk}^{o'}}(t)\right|,$$ the optimal ${{\mathbf{s}}_{jk}^{o'}}(t)$ can be obtained as:
\begin{equation}
\begin{aligned}
{{\mathbf{s}}_{jk}^{o'}}(t)=\frac{| {\mathbf{w}_{jk}^{o'}}(t)|}{\sqrt{{\omega_{jk}^{o'}(t)}^2}},  \forall i.
\label{general:summary:cccp-5}
\end{aligned}
\end{equation}
${\mathbf{w}_{jk}^{o'}}(t)$ can be obtained as follows
\begin{equation}
\begin{aligned}
{\mathbf{w}_{jk}^{o'}}(t)=\argmin\limits_{{\mathbf{w}_{jk}^{o'}}} &
\frac{1}{2}{\mathbf{w}_{jk}^{o'}(t)} \Hessian({\mathbf{w}_{jk}^{o'}(t)}^*) {\mathbf{w}_{jk}^{o'}(t)}+ {\mathbf{w}_{jk}^{o'}(t)} \left(\grad({\mathbf{w}_{jk}^{o'}(t)}^*)-\Hessian({\mathbf{w}_{jk}^{o'}(t)}^*) {\mathbf{w}_{jk}^{o'}(t)}^{*}\right) +\|\omega_{jk}^{o'} \cdot {\mathbf{w}_{jk}^{o'}(t)}\|_{\ell_1}.
\label{general:summary:rwglasso}
\end{aligned}
\end{equation}
We can then inject this into~\eqref{general:summary:cccp-5}, which yields
\begin{equation}
\begin{aligned}
{\mathbf{s}}_{jk}^{o'}(t) = \frac{\mathbf{w}_{jk}^{o'}(t)}{\omega_{jk}^{o'}(t)}
\label{general:summary:gammaik+1}
\end{aligned}
\end{equation}
The update rules for ${\mathbf{s}}_{jk}^{o'}$ without considering the dependency has been explained above.
However, as illustrated in Sec~.\ref{sec:performance}, the dependency between a node and its predecessors should not be disregarded. It means the dependency between edge $e_{jk}^{o'}$ and $\sum_{i<j}e_{ij}^{o}$ should be taken into consideration, then Gaussian prior could be defined as \eqref{eq:ppath} and \eqref{eq:prior_weights_2}:
$$p(\mathbf{w}\given \mathbf{\mathbf{s}}) = \prod_{j<k} \prod_{o \in \mathcal{O}} \prod_{o' \in \mathcal{O}}
p({w_{jk}^{o'}\sum_{i<j}w_{jk}^{o}}) = 
\prod_{j<k} \prod_{o \in \mathcal{O}} \prod_{o' \in \mathcal{O}} \bN(w_{jk}^{o'}\sum_{i<j}w_{ij}^{o}|0, {\mathbf{s}}_{jk}^{o'})
$$
based on this prior, the uncertainty of ${\mathbf{w}_{jk}^{o'}}(t)$ should be computed
as:
\begin{equation}
\begin{aligned}
{\gamma_{ij}^{o'}}(t) = \left(\frac{1}{\sum\limits_{i<j}\sum\limits_{o\in \mathcal{O}}{\mathbf{s}}_{ij}^{o}(t)} + \frac{1}{{\mathbf{s}}_{jk}^{o'}(t)}\right)^{-1}
\label{general:dependency gamma}
\end{aligned}
\end{equation}
As we found the expression of~\eqref{general:summary:cccp-6}, $\omega_{jk}^{o'}(t)$ is function of ${\mathbf{s}}_{jk}^{o'}(t-1)$, therefore ${\mathbf{s}}_{jk}^{o'}(t)$ is function of ${\mathbf{s}}_{jk}^{o'}(t-1)$ and ${\mathbf{w}_{jk}^{o'}}(t)$. We notice that the update for ${\mathbf{w}_{jk}^{o'}}(t)$ is to use the \emph{lasso} or \emph{$\ell_1$-regularised} regression type optimization. 
The pseudo code is summarised in Algorithm~\ref{algorithm}.

\subsection{Algorithm for Proxy Tasks}
\label{app:algorithm for proxy tasks}

\begin{algorithm}[ht]
\caption{The proposed Algorithm is transferable for cell selection on proxy tasks. 
}
\begin{algorithmic}
    \REQUIRE 
    $\boldsymbol{\gamma}_{jk}^{o'}(0), \boldsymbol{\omega}_{jk}^{o'}(0), \mathbf{w}(0) = \mathbf{1}$; sparsity intensity $\lambda_{w}^o \in \R^{+}$; $\lambda=0.01$; cost function $\mathcal{L}_D$ in \eqref{algo:eq1}
    \ENSURE 
    \FOR{$t=1$ to $T_{\max}$}
            \STATE
            1. Maximum likelihood with regularization:
            \begin{align}
            \min\limits_{\mathcal{W}, \mathbf{w}} {E_D}(\cdot) +  \lambda_w \sum_{g}\sum\limits_{j<k} \sum\limits_{o' \in \mathcal{O}}  \|{\omega_{jk,g}^{o'}(t)  \mathbf{w}_{jk,g}^{o'}}\|_2 + \lambda \|{\mathcal{W}}\|_2^2
            \end{align}
            \STATE
            2. Compute Hessian for $\mathbf{w}$ (\eqref{eq:pre_Hessian_cat}, \ref{eq:scalar_pre_Hessian}, \ref{eq:scalar_pre_Hessian_approximation})
            \STATE
            3. Update variables associated with $\mathbf{w}$
            \WHILE{$g \in (1,\Nblock); i<j<k; o, o'\in \mathcal{O}$}
                \STATE 
                \begin{align}
                \label{algo2:hyper_paras_update} 
                & C_{jk}^{o'}(t) = \left(\frac{1}{{\gamma_{jk}^{o'}}(t-1)} + {\mathbf{H}_{jk}^{o'}(t)}\right)^{-1}
                \\
                & {\omega_{jk}^{o'}}(t) \text{ is given by } \ref{general:single_group_update_omega} \text{ and } \ref{general:cell_group_update_omega}
                \\
                & 
                {{\mathbf{s}}_{jk}^{o'}}(t) \text{ is given by } \ref{general:single_group_update_local_gamma} \text{ and } \ref{general:cell_group_update_gamma}
				\\
                & {\gamma_{jk}^{o'}}(t) \text{ is given by } \ref{eq:gamma} \text{ or } \ref{eq:gamma-2},
                {\gamma}_{jk}^{o'}(t) = \left[
                \begin{array}{c|c|c}
                \myunderbrace{{\gamma}_{jk,1}^{o'}(t), \ldots, {\gamma}_{jk,1}^{o'}(t) }{\dimBr_1 ~\textbf{elements}}& \ldots & \myunderbrace{{\gamma}_{jk,\Nblock}^{o'}(t), \ldots, {\gamma}_{jk,\Nblock}^{o'}(t) }{\dimBr_{\Nblock} ~\textbf{elements}}\\
                \end{array}
                \right]
                \end{align}
		\ENDWHILE
    \STATE {4. Prune the architecture if the entropy $\frac{\ln(2\pi e {\gamma_{jk}^{o'}})}{2} \leq 0 $}
    \STATE 5. Fix $\mathbf{w} = \mathbf{1}$, train the pruned net in the standard way
    \ENDFOR
\end{algorithmic}
\label{algorithm:proxy_cell}
\end{algorithm}  

Our algorithm can be easily transferred to the scenario of proxy tasks to find the cell. Suppose a network is assembled by stacking $\Nblock$ different kinds of cells together, such as $\dimBr_1$ normal cells and $\dimBr_{\Nblock}$ reduction cells in \citep{liu2018darts}. Then optimal $\Nblock$ cells are required to be designed in a NAS task. As explained before, we design a switch ${\mathbf{s}}$ for each architecture parameter $w$ to determine the ``on-off'' of the corresponding edge in our method. In order to find such optimal cells, we propose that switches on the same position of the identical kind of cells should also be same. Based on this, the architecture parameters could be divided into different groups. The general grouped architecture parameters are given as follows:
\begin{equation}
\begin{aligned}
{\mathbf{w}_{jk}^{o'}}(t)
 =  \left[
\begin{array}{c|c|c}
\myunderbrace{{\mathbf{w}}_{jk,1}^{{o}'1}(t), \ldots, {\mathbf{w}}_{jk,1}^{o' \dimBr_1}(t) }{\dimBr_1 ~\textbf{elements}}& \ldots & \myunderbrace{{\mathbf{w}}_{jk,\Nblock}^{o'1}(t), \ldots, {\mathbf{w}}_{jk,\Nblock}^{o' \dimBr_\Nblock}(t)}{\dimBr_{\Nblock} ~\textbf{elements}}\\
\end{array}
\right].
\label{general:architecture_parameters_group}
\end{aligned}
\end{equation}
Similar to \eqref{general:summary:cccp-5}, if the group $\nblock$ is consist of $\dimBr_{\nblock}$ elements, where $\nblock = 1, \ldots, \Nblock$, the optimal $s_{jk,\nblock}^{o'}$ can be obtained as: 
\begin{equation}
\begin{aligned}
\sum_{i=1}^{\dimBr_\nblock}\left(\frac{{\mathbf{w}_{jk,\nblock}^{o'}}^\top(t) {\mathbf{w}_{jk,\nblock}^{o'}}(t)}{s_{jk,\nblock}^{o'}(t)} +{\omega_{jk,\nblock}^{o'i}(t)}^2{s_{jk,\nblock}^{o'}(t)}\right) \geq 2\left\Vert\sqrt{\sum_{i=1}^{\dimBr_\nblock} {\omega_{jk,\nblock}^{o'i}(t)}^2} \cdot {\mathbf{w}_{jk,\nblock}^{o'}(t)}\right\Vert_{\ell_2},
\end{aligned}
\end{equation}
then
\begin{equation}
\begin{aligned}
{s_{jk,\nblock}^{o'}}(t)=\frac{\left\Vert {\mathbf{w}_{jk,\nblock}^{o'}(t)}\right\Vert_{\ell_2}}{\sqrt{\sum_{i=1}^{\dimBr_\nblock} {{\omega_{jk,\nblock}^{o'i}(t)}}^2}},  \forall i.
\label{general:single_group_update_local_gamma}
\end{aligned}
\end{equation}
The calculation of $\omega_{jk,g}^{o'}$ for group $\nblock$ is:
\begin{equation}
\begin{aligned}
{\omega}_{jk,o}^{o'}(t) = \sqrt{ \sum_{i=1}^{\dimBr_\nblock} \frac{ \sqrt{{\gamma}_{jk,o}^{o'i}(t-1)- {\rm C}_{jk,o}^{o'i}(t)}}{{{{\gamma}_{jk,o}^{o'i}(t-1)}}^2}}
\label{general:single_group_update_omega}
\end{aligned}
\end{equation}
and both ${\mathbf{s}}$ and $\omega$ for the different elements in identity group should keep the same:
\begin{equation}
\begin{aligned}
 {\mathbf{s}}_{jk}^{o'}(t) = \left[
    \begin{array}{c|c|c}
    \myunderbrace{{\mathbf{s}}_{jk,1}^{o'}(t), \ldots, {\mathbf{s}}_{jk,1}^{o'}(t) }{\dimBr_1 ~\textbf{elements}}& \ldots & \myunderbrace{{\mathbf{s}}_{jk,\Nblock}^{o'}(t), \ldots, {\mathbf{s}}_{jk,\Nblock}^{o'}(t) }{\dimBr_{\Nblock} ~\textbf{elements}}\\
    \end{array}
    \right]
    \label{general:cell_group_update_gamma}
\end{aligned}
\end{equation}
\begin{equation}
\begin{aligned}
 {\omega}_{jk}^{o'}(t) = \left[
    \begin{array}{c|c|c}
    \myunderbrace{{\omega}_{jk,1}^{o'}(t), \ldots, {\omega}_{jk,1}^{o'}(t) }{\dimBr_1 ~\textbf{elements}}& \ldots & \myunderbrace{{\omega}_{jk,\Nblock}^{o'}(t), \ldots, {\omega}_{jk,\Nblock}^{o'}(t) }{\dimBr_{\Nblock} ~\textbf{elements}}\\ \\
    \end{array}
    \right]
    \label{general:cell_group_update_omega}
\end{aligned}
\end{equation}
It should be noted that the detailed derivation procedures can be referred to \ref{general:prior:sec:costfunction} and \ref{algrithm procedures}.
The pseudo code is summarised in Algorithm~\ref{algorithm:proxy_cell}.

\section{Structural Bayesian Deep Compression}
\label{app:structural compression algorithm}
\begin{algorithm}[ht]
\caption{
The proposed Algorithm is transferable for neural network compression.}
\begin{algorithmic}
    \REQUIRE 
    Initialization: $\forall l = 1,\ldots,L,$  ${\omega}^l(0), {\gamma}^l(0) = 1$; $\lambda^l \in \R^{+}$;
    \ENSURE 
    \FOR{$t=1$ to $T_{\max}$}
        \STATE 1. Maximum likelihood with regularization:
            \STATE
            \begin{align}
            \label{algo3:eq1}
            \min\limits_{{\mathcal{W}}} {E_D}(\cdot) +  \sum\limits_{l=1}^{L} \lambda^l R(\omega^l(t) \circ {\mathcal{W}}^l)
            \end{align}
        \STATE 2. Compute the Hessian for fully connected layer and convolutional layers as Appendix.~\ref{appendix:fchessian} and \ref{appendix:convhessian} respectively.
        \STATE 3. Update hyper-parameters:
        \\
        \COMMENT{\text{{\rm Update}() specifies how to update parameters, detailed update rules are in Table~\ref{tab:cnn_mlp}}}
        \STATE 
            \begin{align}
        \label{algo3:hyper_paras_update} 
            & {\gamma}^l(t) \leftarrow  {\rm Update}(\omega^l(t-1), {\mathcal{W}^l}(t)), {\Gamma^l}(t) = [{\gamma}^l(t)]
            \\
            & C^l(t) \leftarrow \left(({{\Gamma}^l}(t))^{-1} + \diag({\rm H}^l(t))\right)^{-1},
            \\
            & {\alpha}^l(t) \leftarrow   -{\frac{{\rm C}^l(t)}{{{\gamma}^l(t)}^2}}+\frac{1}{{\gamma}^l(t)} \COMMENT{\text{element-wise division}}
            \\
            & 
            {\omega}^l(t) \leftarrow  {\rm Update}({\alpha}^{l}(t))
            \end{align}
        \STATE {4. Prune the unimportant connections.}
    \ENDFOR
\end{algorithmic}
\label{algorithm:structural_sparsity}
\end{algorithm}

In addition to applying the proposed Bayesian approach to address NAS problem, we also explore the possibility of our method on network structural compression problem. In this section, we extend to compress deep neural networks by proposing a series of generic and easily implemented reweighted group Lasso algorithms to solve maximization of marginal likelihood $\int p(\mathbf{Y}\given\boldsymbol{\mathcal{W}})p(\boldsymbol{\mathcal{W}})d\boldsymbol{\mathcal{W}}$ where $p(\boldsymbol{\mathcal{W}})$ can be specified as various sparse structured priors over network weights as shown in Table~\ref{tab:cnn_mlp}. The proposed Algorithm is generic for the weights in fully connected and convolutional neural networks. The training algorithm is iteratively indexed by $t$. Each iteration contains several epochs. Within each iteration $t$, there are three parts, The first part is simply a reweighted group Lasso type optimization. In the regularization terms $R(\omega^l \circ {\mathcal{W}^l})$, each weight of layer $l$ is scaled by a factor $(\omega^l)$. The update of $(\omega^l)$ needs the Hessian of each $\boldsymbol{\mathcal{W}}^l$. The Hessian of each layer can be computed recursively given the Hessian in the next layer through backward passing. The hyper-parameters $\gamma^l$, $C^l$ and $\omega^l$ will be updated every iteration $t$ with $T_{\max}$ being the maximal iterations. $\alpha^l$ is an introduced intermediate variable during the update process. The pseudo code is summarized in Algorithm~\ref{algorithm:structural_sparsity}.

\begin{figure}[]
\centering
\includegraphics[scale=0.5]{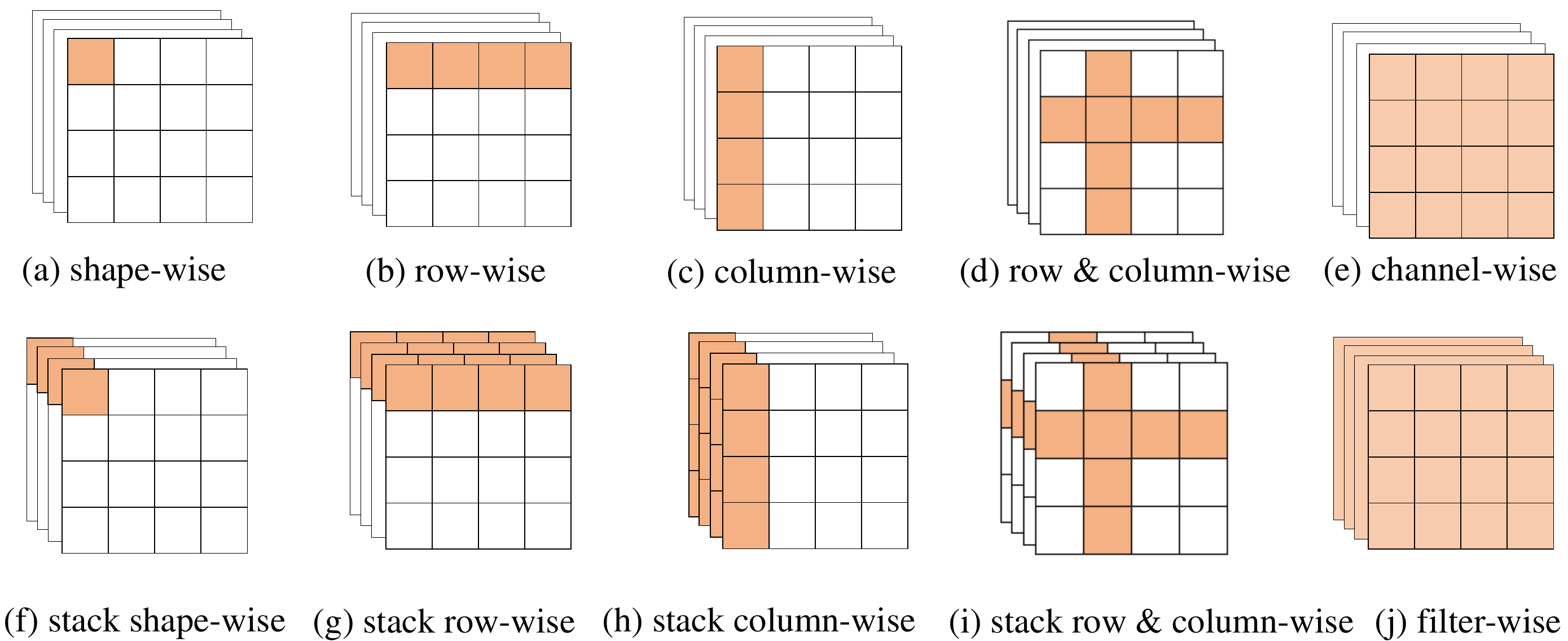}
\caption{some examples of structured sparsity for the 3D filters in Conv layer with extensions of \cite{wen2016learning}. Coloured squares mean the weights to be pruned. It should be noted that the FC layer can be easily enforced by (a)-(e).}
\label{fig:cnn_shape}
\end{figure}

\begin{table*}[ht]
\centering
\caption{Hyper-parameter update rule in Algorithm\ref{algorithm:structural_sparsity} for CNN}
\label{tab:cnn_mlp}
\resizebox{1\textwidth}{!}{
\begin{tabular}{|c|c|c|c|c|}
\hline
Category  & sparse prior     & $R^l(\omega\circ\boldsymbol{\mathcal{W}})$  & ${\omega}^{l}$    & ${\gamma}^{l}$  \\ \hline
(a) Shape-wise & $\prod\limits_{c_l}\prod\limits_{m_l}\prod\limits_{k_l}\mathcal{N}(\mathbf{0}, \gamma_{c_l,m_l,k_l}\mathbf{I}_{n_l}) $               
& $\sum\limits_{c_l = 1}^{C_l}\sum\limits_{m_l = 1}^{M_l}\sum\limits_{k_l = 1}^{K_l}\|\omega_{:,{c_l},{m_l},{k_l}}^l \circ \boldsymbol{\mathcal{W}}_{:,{c_l},{m_l},{k_l}}^l\|_2$                    
& \begin{tabular}[c]{@{}c@{}}
    ${\omega}^{l}_o = \sqrt{\sum\limits_{c_l}\sum\limits_{m_l}\sum\limits_{k_l}|\alpha_{:,{c_l},{m_l},{k_l}}^l|}$\\ 
    ${\omega}_{:,{c_l},{m_l},{k_l}}^l = {\omega}^{l}_o \cdot \rmI_{:,{c_l},{m_l},{k_l}}^l$
\end{tabular}                                                                       
& \begin{tabular}[c]{@{}c@{}}
    ${\gamma}^{l}_o = \frac{\|\boldsymbol{\mathcal{W}}_{:,{c_l},{m_l},{k_l}}^l\|_2}{\omega_{:,{c_l},{m_l},{k_l}}^{l}(t-1)}$\\ 
    $\gamma_{:,{c_l},{m_l},{k_l}}^l = {\gamma}^{l}_o \cdot \rmI_{:,{c_l},{m_l},{k_l}}^l$
\end{tabular}  \\ \hline
(b) Row-wise  & $\prod\limits_{c_l}\prod\limits_{m_l}\mathcal{N}(\mathbf{0}, \gamma_{c_l,m_l}\mathbf{I}_{n_lk_l}) $ 
& $\sum\limits_{c_l = 1}^{C_l}\sum\limits_{m_l = 1}^{M_l}\|\omega_{:,{c_l},{m_l},:}\circ \boldsymbol{\mathcal{W}}_{:,{c_l},{m_l},:}^l\|_2$       
& \begin{tabular}[c]{@{}c@{}}
    ${\omega}^{l}_o = \sqrt{\sum\limits_{c_l}\sum\limits_{m_l}{|\alpha_{:,{c_l},{m_l},:}^l|}}$\\ 
    ${\omega}_{:,{c_l},{m_l},:}^l = {\omega}^{l}_o \cdot \rmI_{:, {c_l},{m_l},:}^l
$\end{tabular}                                                                              
& \begin{tabular}[c]{@{}c@{}}
    ${\gamma}^{l}_o = \frac{\|\boldsymbol{\mathcal{W}}_{:,{c_l},{m_l},:}^l\|_2}{\omega_{:,{c_l},{m_l},:}^{l}(t-1)}$\\ 
    $\gamma_{:,{c_l},{m_l},:}^l = {\gamma}^{l}_o \cdot \rmI_{:,{c_l},{m_l},:}^l$
\end{tabular}                                                                  
\\ \hline
(c) Column-wise &    $\prod\limits_{c_l}\prod\limits_{k_l}\mathcal{N}(\mathbf{0}, \gamma_{c_l,k_l}\mathbf{I}_{n_lm_l}) $            
& $\sum\limits_{c_l = 1}^{C_l}\sum\limits_{k_l = 1}^{K_l}\|\omega_{:,{c_l},:,{k_l}}^l \circ \boldsymbol{\mathcal{W}}_{:,{c_l},:,{k_l}}^l\|_2$                                                                     
& \begin{tabular}[c]{@{}c@{}}
    ${\omega}^{l}_o = \sqrt{\sum\limits_{c_l}\sum\limits_{k_l}{|\alpha_{:,{c_l},:,{k_l}}^l|}}$\\ 
    ${\omega}_{:,{c_l},:,{k_l}}^l = {\omega}^{l}_o \cdot \rmI_{:, {c_l},:,{k_l}}^l$
\end{tabular}                                                                              
& \begin{tabular}[c]{@{}c@{}}
    ${\gamma}^{l}_o = \frac{\|\boldsymbol{\mathcal{W}}_{:,{c_l},:,{k_l}}^l\|_2}{\omega_{:,{c_l},:,{k_l}}^{l}(t-1)}$\\ 
    $\gamma_{:,{c_l},:,{k_l}}^l = {\gamma}^{l}_o \cdot \rmI_{:,{c_l},:,{k_l}}^l$
\end{tabular}                                                                  \\ \hline
(d) Row $\&$ column-wise   &  $\prod\limits_{c_l}\prod\limits_{m_lk_l}\mathcal{N}(\mathbf{0}, \gamma_{c_l, m_lk_l}\mathbf{I}_{n_l}) $   
& \begin{tabular}[c]{@{}c@{}}
    ${\bar{W}}^l = [\boldsymbol{\mathcal{W}}_{:,{c_l},{m_l},:}^l , \boldsymbol{\mathcal{W}}_{:,{c_l},:,{k_l}}^l]$\\ 
    ${\bar{\omega}}^{l}= [\omega_{:,{c_l},{m_l},:}^l , \omega_{:,{c_l},:,{k_l}}^l]$\\
    $\sum\limits_{c_l = 1}^{C_l}\sum\|{\bar{\omega}}^{l} \circ {\bar{W}}^l\|_2$ 
\end{tabular} 
& \begin{tabular}[c]{@{}c@{}}
    ${\omega}_o^{l}= \sqrt{\sum\limits_{c_l}\sum\limits_{m_l}{|\alpha_{:,{c_l},{m_l},:}^l|}+\sum\limits_{c_l}\sum\limits_{k_l}{|\alpha_{:,{c_l},:,{k_l}}^l|}}$\\ 
    ${\omega}_{:,{c_l},{m_l},:}^l = {\omega}^{l}_o \cdot \rmI_{:,{c_l},{m_l},:}^l$\\ 
    ${\omega}_{:,{c_l},:,{k_l}}^l = {\omega}^{l}_o \cdot \rmI_{:,{c_l},:,{k_l}}^l$
\end{tabular} 
& \begin{tabular}[c]{@{}c@{}}
    ${\gamma}^{l}_o = \frac{\|{\bar{W}}^l\|_2}{{\bar{\omega}}^{l}(t-1)}$\\ 
    $\gamma_{:,{c_l},{m_l},:}^l = {\gamma}^{l}_o \cdot \rmI_{:,{c_l},{m_l},:}^l$\\ 
    $\gamma_{:,{c_l},:,{k_l}}^l = {\gamma}^{l}_o \cdot \rmI_{:,{c_l},:,{k_l}}^l$
\end{tabular} \\ \hline
(e) Channel-wise     &   $\prod\limits_{c_l}\mathcal{N}(\mathbf{0}, \gamma_{c_l}\mathbf{I}_{n_lm_lk_l}) $       
& $\sum\limits_{c_l = 1}^{C_l}\|\omega_{:,{c_l},:,:}^l \circ \boldsymbol{\mathcal{W}}_{:,{c_l},:,:}^l\|_2$                 
& \begin{tabular}[c]{@{}c@{}}
    ${\omega}^{l}_o = \sqrt{\sum\limits_{c_l}{|\alpha_{:,{c_l},:,:}^l|}}$\\ 
    $\omega_{:,{c_l},:,:}^l = {\omega}^{l}_o \cdot \rmI_{:,{c_l},:,:}^l
$\end{tabular}                                                                                              
& \begin{tabular}[c]{@{}c@{}}
    ${\gamma}^{l}_o = \frac{\|\boldsymbol{\mathcal{W}}_{:,{c_l},:,:}^l\|_2}{\omega_{:,{c_l},:,:}^{l}(t-1)}$\\ 
    $\gamma_{:,{c_l},:,:}^l = {\gamma}^{l}_o \cdot \rmI_{:,{c_l},:,:}^l$
\end{tabular}     \\ \hline
(f) Group shape-wise & $\prod\limits_{m_l}\prod\limits_{k_l}\mathcal{N}(\mathbf{0}, \gamma_{m_l,k_l}\mathbf{I}_{n_lc_l}) $          
& $\sum\limits_{m_l = 1}^{M_l}\sum\limits_{k_l = 1}^{K_l}\|\omega_{:, :,{m_l},{k_l}}^l \circ \boldsymbol{\mathcal{W}}_{:, :,{m_l},{k_l}}^l\|_2$
& \begin{tabular}[c]{@{}c@{}}${\omega}^{l}_o = \sqrt{\sum\limits_{m_l}\sum\limits_{k_l}|\alpha_{:, :,{m_l},{k_l}}^l|}$\\ ${\omega}_{:, :, {m_l},{k_l}}^l = {\omega}^{l}_o \cdot \rmI_{:, :,{m_l},{k_l}}^l$
\end{tabular}                                                                             
& \begin{tabular}[c]{@{}c@{}}
    ${\gamma}^{l}_o = \frac{\|\boldsymbol{\mathcal{W}}_{:, :,{m_l},{k_l}}^l\|_2}{\omega_{:, :,{m_l},{k_l}}^{l}(t-1)}$\\ 
    $\gamma_{:, :,{m_l},{k_l}}^l = {\gamma}^{l}_o \cdot \rmI_{:, :,{m_l},{k_l}}^l$
\end{tabular}                                                              
\\ \hline
(g) Group row-wise &   $\prod\limits_{m_l}\mathcal{N}(\mathbf{0}, \gamma_{m_l}\mathbf{I}_{n_lc_lk_l}) $          
& $\sum\limits_{m_l = 1}^{M_l}\|\omega_{:, :,{m_l},:}^l \circ \boldsymbol{\mathcal{W}}_{:, :,{m_l},:}^l\|_2$                                                                   
& \begin{tabular}[c]{@{}c@{}}
    ${\omega}^{l}_o = \sqrt{\sum\limits_{m_l}{|\alpha_{:, :,{m_l},:}^l|}}$\\ 
    ${\omega}_{:, :,{m_l},:}^l = {\omega}^{l}_o \cdot \rmI_{:, :,{m_l},:}^l$
\end{tabular}                                                                                       
& \begin{tabular}[c]{@{}c@{}}
    ${\gamma}^{l}_o = \frac{\|\boldsymbol{\mathcal{W}}_{:, :,{m_l},:}^l\|_2}{\omega_{:, :,{m_l},:}^{l}(t-1)}$\\ 
    $\gamma_{:, :,{m_l},:}^l = {\gamma}^{l}_o \cdot \rmI_{:, :,{m_l},:}^l$
\end{tabular}     
\\ \hline
(h) Group column-wise &   $\prod\limits_{k_l}\mathcal{N}(\mathbf{0}, \gamma_{k_l}\mathbf{I}_{n_lc_lm_l}) $       
& $\sum\limits_{k_l = 1}^{K_l}\|\omega_{:, :, :,{k_l}}^l \circ \boldsymbol{\mathcal{W}}_{:, :,:,{k_l}}^l\|_2$                                                           
& \begin{tabular}[c]{@{}c@{}}
    ${\omega}^{l}_o = \sqrt{\sum\limits_{k_l}{|\alpha_{:, :, :,{k_l}}^l|}}$\\ 
    ${\omega}_{:, :, :,{k_l}}^l = {\omega}^{l}_o \cdot \rmI_{:, :, :,{k_l}}^l$
\end{tabular}                                                                                    
& \begin{tabular}[c]{@{}c@{}}
    ${\gamma}^{l}_o = \frac{\|\boldsymbol{\mathcal{W}}_{:, :, :,{k_l}}^l\|_2}{\omega_{:, :,:,{k_l}}^{l}(t-1)}$\\ 
    $\gamma_{:, :, :,{k_l}}^l = {\gamma}^{l}_o \cdot \rmI_{:, :, :,{k_l}}^l$
\end{tabular}                                            
\\ \hline
(i) Group row $\&$ column-wise &$\prod\limits_{m_lk_l}\mathcal{N}(\mathbf{0}, \gamma_{m_lk_l}\mathbf{I}_{n_lc_l}) $ 
& \begin{tabular}[c]{@{}c@{}}
    ${\bar{W}}^l = [\boldsymbol{\mathcal{W}}_{:, :,{m_l},:}^l , \boldsymbol{\mathcal{W}}_{:, :,:,{k_l}}^l]$\\ 
    ${\bar{\omega}}^{l}= [\omega_{:, :, {m_l}, :}^l , \omega_{:, :, :,{k_l}}^l]$ \\
    $\sum\|{\bar{\omega}}^{l} \circ {\bar{W}}^l\|_2$
\end{tabular}                              
& \begin{tabular}[c]{@{}c@{}}
    ${\omega}_o^{l}= \sqrt{\sum\limits_{m_l}{|\alpha_{:, :, {m_l},:}^l|}+\sum\limits_{k_l}{|\alpha_{:, :, :,{k_l}}^l|}}$\\ 
    ${\omega}_{:, :,{m_l},:}^l = {\omega}^{l}_o \cdot \rmI_{:, :,{m_l},:}^l$\\ 
    ${\omega}_{:, :, :,{k_l}}^l = {\omega}^{l}_o \cdot \rmI_{:,:,:,{k_l}}^l$
\end{tabular}           
& \begin{tabular}[c]{@{}c@{}}
    ${\gamma}^{l}_o = \frac{\|{\bar{W}}^l\|_2}{{\bar{\omega}}^{l}(t-1)}$\\ 
    $\gamma_{:, :,{m_l},:}^l = {\gamma}^{l}_o \cdot \rmI_{:, :,{m_l},:}^l$\\ 
    $\gamma_{:, :,:,{k_l}}^l = {\gamma}^{l}_o \cdot \rmI_{:, :,:,{k_l}}^l$
\end{tabular}             
\\ \hline
(j) Filter-wise      &   $\prod\limits_{n_l}\mathcal{N}(\mathbf{0}, \gamma_{n_l}\mathbf{I}_{c_lm_lk_l}) $        
& $\sum\limits_{n_l = 1}^{N_l}\|\omega_{{n_l},:,:,:}^l \circ \boldsymbol{\mathcal{W}}_{{n_l},:,:,:}^l\|_2$  
& \begin{tabular}[c]{@{}c@{}}
    ${\omega}^{l}_o = \sqrt{\sum\limits_{n_l}{|\alpha_{{n_l},:,:,:}^l|}}$\\ 
    $\omega_{{n_l},:,:,:}^l = {\omega}^{l}_o \cdot \rmI_{{n_l},:,:,:}^l$
\end{tabular}                 
& \begin{tabular}[c]{@{}c@{}}
    ${\gamma}^{l}_o =\frac{\|\boldsymbol{\mathcal{W}}_{{n_l},:,:,:}^l\|_2}{\omega_{{n_l},:,:,:}^{l}(t-1)}$ \\ 
    $\gamma_{{n_l},:,:,:}^l = {\gamma}^{l}_o \cdot \rmI_{{n_l},:,:,:}^l$
\end{tabular} \\ \hline
\end{tabular}
}
\end{table*}

\vspace{3cm}
\subsection{Structured Sparse Prior for Fully Connected and Convolutional Networks}
\label{sec:method:cnn}
For weight in the $l$-th convolutional layer $W^l \in {\R^{N_l \times C_{l} \times m_{l} \times k_{l}}}$, some examples of the structured sparsity are shown in Fig.\ref{fig:cnn_shape}. The corresponding sparse prior is given in the second column of Table~\ref{tab:cnn_mlp}. It should be noted that the prior for the weight in fc layer could also be represented as this table with $m_{l} = k_{l} =1$, $N_l$ and $C_l$ stands for the size of input feature and output feature respectively.

\subsection{Experiments}
\label{app:structural compression experiment}
\subsubsection{LeNet-300-100 and LeNet-5 on MNIST}
\label{exp:mnist}
We first perform LeNet-300-100 and LeNet-5 on MNIST dataset \cite{lecun1998mnist}.
For LeNet-300-100, we apply shape-wise, row-wise and column-wise regularization as shown in Fig.~\ref{fig:cnn_shape}(a), \ref{fig:cnn_shape}(b) and \ref{fig:cnn_shape}(c), for the 2D weight matrices.
The hyper-parameters $\gamma$, $\omega$ and $\alpha$ are updated every ten epochs for a total of $T_{\max} = 10$ loops. The learned structure is $465-37-90$ with $1.54\%$ test error and $0.04$ FLOPS \cite{molchanov2016pruning}. Comparison with other methods can be found in Table~\ref{table:lenet300100}.
For LeNet-5, we apply shape-wise and filter-wise regularization for the conv layer as shown in Fig.~\ref{fig:cnn_shape}(a) and \ref{fig:cnn_shape}(j);  row-wise and column-wise regularization for fc layer as shown in Fig.~\ref{fig:cnn_shape}(b) and \ref{fig:cnn_shape}(c).
The learned structure is $5-10-65-25$ with $1.00\%$ test error and $0.57$ FLOPS. Comparison with other methods can be found in Table~\ref{tab:lenet_5_mnist}.

\begin{table}[ht]
\caption{Comparison of the learned architecture with other methods using LeNet-300-100 on MNIST dataset}
\centering
\begin{tabular}{ccccc}
\hline
Method                          & Pruned Architecture & Error Rate (\%) & FLOPs (M) \\ \hline
Baseline                        & 784-300-100         & 1.39            & 0.53       \\
SBP (\cite{neklyudov2017structured})                         & 245-160-55          & 1.60            & 0.10      \\
BC-GNJ (\cite{louizos2017bayesian})                      &  278-98-13           & 1.80            & 0.06      \\
BC-GHS (\cite{louizos2017bayesian})                      & 311-86-14           & 1.80            & 0.06      \\
Practical $\ell_0$(\cite{louizos2017learning}) & 219-214-100         & 1.40            & 0.14      \\
Practical $\ell_0$ (\cite{louizos2017learning}) & 266-88-33           & 1.80            & 0.05      \\ \hline
Proposed method             & 465-37-90           & 1.54            & 0.04      \\ \hline
\end{tabular}
\label{table:lenet300100}
\end{table}

\begin{table}
\caption{Comparison of the learned architecture with other methods using LeNet-5 on MNIST dataset}
\label{tab:lenet_5_mnist}
\centering
\begin{tabular}{ccccc}
\hline
Method                          & Pruned Architecture & Error Rate (\%) & FLOPs (M) \\ \hline
Baseline                        & 20-50-800-500       & 0.83            & 8.85      \\
SBP (\cite{neklyudov2017structured})                         & 3-18-284-283        & 0.90            & 0.69      \\
BC-GNJ (\cite{louizos2017bayesian})                      & 8-13-88-13          & 1.00            & 1.09      \\
BC-GHS (\cite{louizos2017bayesian})                      & 5-10-76-16          & 1.00            & 0.57      \\
Practical $\ell_0$(\cite{louizos2017learning}) & 20-25-45-462        & 0.90            & 4.49      \\
Practical $\ell_0$ (\cite{louizos2017learning}) & 9-18-65-25          & 1.00            & 1.55      \\ \hline
Proposed method             & 5-10-65-25          & 1.00            & 0.57      \\ \hline
\end{tabular}
\end{table}

\subsubsection{ResNet-18 on CIFAR-10}
\label{exp:cifar10}
We also evaluate our algorithm on Cifar10 dataset using ResNet-18 as initialized backbone \cite{he2016deep}. In addition to the input conv layer and output fc layer, the other 16 conv layers are separated into 8 blocks with 2 layers each. We apply shape-wise and filter-wise regularization to the conv layer as shown in Fig.~\ref{fig:cnn_shape}(a) and \ref{fig:cnn_shape}(j);  row-wise and column-wise regularization to the fc layer as shown in Fig.~\ref{fig:cnn_shape}(b) and \ref{fig:cnn_shape}(c).
The result is given in Table~\ref{table:res18}. It can be found that two Conv layers in block 4 are pruned away which shows the potential of our method to reduce the number of layers.
\begin{table}[ht]
\centering
\caption{Sparsity for each layer in ResNet-18 on Cifar10 dataset}
\begin{tabular}{cccccc|c|c}
\hline
conv1  & conv2-5 & conv6-9 & conv10-13& con14-17 & FC layer& Total & Test error \\ \hline
22.80\% & \begin{tabular}[c]{@{}c@{}}
10.06\%\\ 22.87\%\\ 20.05\%\\ 12.99\%
\end{tabular} & \begin{tabular}[c]{@{}c@{}}9.24\%\\ 5.45\%\\ 4.94\%\\ 10.73\%\end{tabular} & \begin{tabular}[c]{@{}c@{}}20.64\%\\ 15.04\%\\ 10.77\%\\ 4.61\%\end{tabular} & \begin{tabular}[c]{@{}c@{}}1.43\%\\ 0.19\%\\ 0\\ 0\end{tabular} & 10.33\%   & 2.96\% & \begin{tabular}[c]{@{}c@{}}
6.58\% \\(\text{baseline})\\  6.23\%\\ (\text{our method})
\end{tabular} \\ \hline
\end{tabular}
\label{table:res18}
\end{table}

\section{Efficient Hessian Computation}
\label{appdendix:hessian calculation methods}
\subsection{Compute the Hessian of FC Layer}
\label{appendix:fchessian}
The mathematical operation in a fully-connected layer could be formulated as: 
\begin{align}
\label{eq:fc_feedforward} 
    h_j^o = W_{ij}^o a_i,  \ \ \ 
    a_i = \sigma(h_i) \ \ \
\end{align}

where $h_{i}$ is the pre-activation value for node $i$ and $a_{i}$ is the activation value. $\sigma()$ is the element-wise activation function. ${\mathcal{W}_{ij}^o}$ stands for the weight matrix associated with operation $o$ in edge $e_{ij}^o$. In \cite{botev2017practical}, a recursive method is proposed to compute the Hessian $\rmH$ for $\mathcal{W}_{ij}^o$:
\begin{equation}
    \label{eq:fc_Hessian}  
    {\rmH}_{ij}^o = a_i \cdot({a_i})^\top \otimes H_j^o
\end{equation}
where $\otimes$ stands for Kronecker product; The pre-activation Hessian $H_j^o$ is known and could be used to compute the pre-activation Hessian recursively for the previous layer:

\begin{align}
\label{eq:fc_pre_Hessian} 
    H_i = B_i ({\mathcal{W}_{ij}^o})^{\top} H_{j}^o \mathcal{W}_{ij}^o B_i + D_i,  \ \ \ 
    B_i = \diag(\sigma'(h_i)), \ \ \
    D_i = \diag(\sigma''(h_i) \circ \frac{{\partial}L}{{\partial}a_i})
\end{align}
 
In order to reduce computation complexity, the original pre-activation Hessian $H$ and Hessian $\rmH$ in Eq ~\ref{eq:fc_Hessian}-\ref{eq:fc_pre_Hessian} are replaced with their diagonal values for recursive computation. Thus the matrix multiplication could be reduced to vector multiplication. The hessian calculation process could be reformulated as:
\begin{equation}
    \label{eq:fc_Hessian_approximation}  
    {\rmH}_{ij}^o = a_i^2 \otimes H_j^o
\end{equation}
\begin{align}
\label{eq:fc_pre_Hessian_approximation} 
        H_i = B_i^2 \circ ((({\mathcal{W}_{ij}^o)^{\top}})^2 H_{j}^o)+ D_i,\ \ \
        B_i = \sigma'(H_i), \ \ \ D_i=\sigma''(H_i) \circ \frac{{\partial}L}{{\partial}a_i}
\end{align}
Where $\text{diag}()$ means the operation to extract the diagonal values of input variable. If we compute Hessian with the approximate method as Eq~ \ref{eq:fc_Hessian_approximation}-\ref{eq:fc_pre_Hessian_approximation}, the multiply accumulate operation (MACs) for the pre-activatiion Hessian $H$ and Hessian $\rmH$ could be reduced from $n(2m^2+2n^2+4mn+3m-1)$ to $n(2+4m)$ with $W \in \R^{n \times m}$. (e.g. with $n=100, m=100$, the original method requires $107.97$ MMACs compared with only $0.04$ MMACs for the approximate method.)

\subsection{Compute the Hessian of Conv Layer}
\label{appendix:convhessian}
Although the Hessian of weight matrix has been widely used in second-order optimization techniques to speed up the training process \citep{lecun1990optimal, amari1998natural}, it still remains infeasible to calculate explicit Hessian directly due to the intensive computation burden \citep{martens2015optimizing, botev2017practical}. 
Moreover, as most of current deep neural networks include plenty of Convolutional (Conv) layers, it further increases the difficulty of calculation due to the indirect convolution operation. Inspired by the Hessian calculation methods for Fully Connected (FC) layers as shown in \cite{botev2017practical}, we propose a recursive and efficient method to compute the Hessian of Conv layers by converting Conv layers to FC layers \cite{ma2017equivalence}. Therefore Hessian of the resulting equivalent FC layer is ready to be obtained. The detailed calculation procedures are explained in the following:

Suppose a convolution operation $o$ is selected between node $i$ and $j$ ($i<j$). The corresponding input vector, weight 
and output vector of this edge are denoted as $B_i \in \R^{b \times C_{i} \times H_i \times W_i}$, $\mathcal{W}_{ij}^o \in \R^{C_{j}^o \times C_{i} \times m_{ij}^o \times k_{ij}^o}$ and $B_j^o \in \R^{b \times C_{j}^o \times H_{j}^o \times W_{j}^o}$ respectively, where $b$ is the batch size, $C_i$, $H_i$, $W_i$ are the size of input channel, height and width; $C_j^o$ is the size of output channel, $ m_{ij}^o \times k_{ij}^o$ is the kernel dimension; $H_j^o$ and $W_{j}^o$ are the size of output height and width. 

As in \cite{ma2017equivalence}, $B_{i}$ is converted to two dimensional matrix for FC layer, with dimension ${(b H_{j}^o W_{j}^o) \times (C_{i} m_{ij}^o k_{ij}^o)}$. Similarly, the dimension of $B_{j}^o$ is changed from $b \times C_{j}^o \times H_{j}^o \times W_{j}^o$ to $(b H_{j}^o W_{j}^o) \times C_{j}^o$. The dimension of $\mathcal{W}_{ij}^{o}$ is changed to $\R^{(C_{i} m_{ij}^o k_{ij}^o ) \times C_{j}^o}$. The input vector, output vector and weight for the FC layer are denoted 
as  $M_{i}$, $M_{j}^o$ and $\mathcal{W}_{ij}^{oM}$. 

Secondly, $M_i$, $M_{j}^o$ and $H^{oM}_{j}$ are decomposed into a total of $b H_j^o W_j^o$ row vectors with $(M_i)^{n} \in \R^{C_i  m_{ij}^o k_{ij}^o}$,  $(M_j^o)^n\in \R^{C_j^o}$ and $(H^{oM}_{j})^n\in \R^{C_j^o}$ ($n = 1, \ldots, b H_j^o W_j^o$) respectively. It is easy to understand that $(M_i)^n$, $(M_{j}^o)^{n}$ could be regarded as the input vector and output vector of a FC layer with weight matrix $\mathcal{W}_{ij}^{oM}$. 
Then we can obtain the Hessian $\mathbf{H}_{ij}^o$ for $\mathcal{W}_{ij}^o$ as follows:
\begin{equation}
    ({\mathbf{H}}^{oM}_{ij})^n =(M_{i})^n \cdot {(M_{i})^n}^{\top} \otimes ({H^{oM}_{j}})^n
    \label{eq:conv_Hessian_vector}
\end{equation}
$({H^{oM}_{j}})^n$ is the pre-activation Hessian which could be computed recursively. With $({H^{oM}_{j}})^n$ known, the pre-activation Hessian for $(M_i)^n$  could be calculated as:
\begin{align}
    ({H^{M}_{i}})^n &= (B_i)^n {\mathcal{W}^{oM}_{ij}}^{\top} ({H^{oM}_{j}})^n {\mathcal{W}^{oM}_{ij}} (B_i)^n + (D_i)^n \nonumber\\
    (B_i)^n &= \diag\left(\sigma'((h_{i})^{n})\right) \nonumber \\
    (D_i)^n &= \diag\left(\sigma''((h_{i})^{n}) \frac{{\partial}L}{{\partial}(M_{i})^{n}}\right) \nonumber
\end{align}
where $(h_{i})^{n}$ is the pre-activation value for FC layer and $L$ means the loss function. The pre-activation Hessian $H^M_i$ could be obtained after concatenating all $(H^M_{i})^n$ as
\begin{align}
    H^M_i = [\diag((H^{M}_{i})^1);\ldots; \diag((H^{M}_i)^{{b H_{j}^o W_{j}^o}})]   
    \label{eq:pre_Hessian_cat}
\end{align}
the Hessian $\mathbf{H}^{oM}_{ij}$ for $\mathcal{W}^{oM}_{ij}$ can be obtained as:
\begin{align}
\mathbf{H}^{oM}_{ij} = \frac{1}{b H_{j}^{o} W_{j}^{o}}\sum_{n=1}^{b H_{j}^o W_{j}^o} (\mathbf{H}^{oM}_{ij})^n
\end{align}
It should be noted that as pre-activation Hessian is a recursive variable for convolutional layer and Hessian will be used for updating hyper-parameters which will be introduced later, both $H^{M}_{i}$ and $\mathbf{H}^{oM}_{ij}$ should be converted back to conv type before imparting to next layer with dimension $\R^{{b \times C_{i} \times H_{i} \times W_{i}}}$ and $\R^{C_{j}^{o} \times C_{i} \times m_{ij}^{o} \times k_{ij}^{o}}$.

As analyzed in Sec ~\ref{appendix:fchessian}, the Hessian calculation may cost a lot of time and resource. In order to address this problem, we propose the following approximate method:
\begin{align}
     {\rmH}^{oM}_{ij} = \mathbb{E}(M_{i})^2 \otimes \mathbb{E}(H^{oM}_{j})
    \label{eq:conv_Hessian_approximation}
\end{align}
\begin{align}
    H^M_i = \underbrace{{\left[{\hat{H}}^M_i;\ldots;{\hat{H}}^M_i 
    \right ]}}_{(b H_{\text{out}} W_{\text{out}})}, \ \
    \hat{H}_i^M = (B_i)^2 \circ (({\mathcal{W}^{oM}_{ij}})^2 \mathbb{E}(H^{oM}_j)^{\top}) + D_i
    \label{eq:conv_pre_hessian_approximation}
\end{align}
where \begin{math} B_i = \mathbb{E}(\sigma'(h_{i})), \ \
    D_i=\mathbb{E}(\sigma''(h_i) \circ \frac{{\partial}L}{{\partial}M_i}) \end{math}; 
$\mathbb{E}()$ returns a vector which stands for the mean value of input variable along feature map. An approximate pre-activation Hessian is calculated without decomposition of input variables, which saves more than $b H_{j}^o W_{j}^o$ times multiply-accumulate operations (MACs). 

\subsection{Compute the Hessian of Architecture Parameter}
\label{appendix:scalarhessian}
After we have the computation method for the Hessian of a Convolutional layer, we need to consider the Hessian of an architecture parameter. Now the output from node $i$ to $j$ under operation $o$ becomes $w_{ij}^o B_i$, where $w_{ij}^o$ is the architecture parameter and $B_i$ stands for the input vector.

Inspired by \cite{botev2017practical}, the Hessian for $w_{ij}^o$ could be computed recursively as $\mathbf{H}_{ij}^{o} = \mathbb{E}(\sum (B_i)^2 H_{j})$, 
where $H_{j}$ is supposed to be the known pre-activation Hessian for $B_{j}$ and $H_i$ is the pre-activation Hessian for $B_{i}$. 

\vspace{-0.7cm}
{\small
\begin{align}
    \label{eq:scalar_pre_Hessian}  
    H_i = \sum\limits_{o\in \mathcal{O}}({w_{ij}^{o}})^2 H_{j}.
\end{align}}Since $B_{i}$ and $H_{j}$ are independent of each other, the Hessian $\mathbf{H}_{ij}^o$ could also be calculated more efficiently:

\vspace{-0.7cm}
{\small
\begin{align}
\label{eq:scalar_pre_Hessian_approximation}
  \mathbf{H}_{ij}^o = (\mathbb{E}(|B_{i}|)^2 H_{j}
\end{align}}where $\mathbb{E}$ will return the mean.

\section{Detailed Settings of Experiments}
\label{app:expsetup}
\subsection{Architecture Search on CIFAR-10}

\paragraph{Data Pre-processing and Augmentation Techniques}
\label{datapr}
We employ the following techniques in our experiments: centrally padding the training images to $40\times 40$ and then randomly cropping them back to $32\times 32$; randomly flipping the training images horizontally; normalizing the training and validation images by subtracting the channel mean and dividing by the channel standard deviation.

\paragraph{Implementation Details of Operations}
\label{operations}
The operations include: 3 $\times$ 3 and 5 $\times$ 5 separable convolutions, 3 $\times$ 3 and 5 $\times$ 5 dilated convolutions, 3 $\times$ 3 max pooling, 3 $\times$ 3 average pooling, and skip connection. All operations are of stride one (excluded the ones adjacent to the input nodes in the reduction cell, which are of stride two) and the convolved feature maps are padded to preserve their spatial resolution. Convolutions are applied in the order of BN-ReLU-Conv and the depthwise separable convolution is always applied twice \citep{zoph2017learning, real2018regularized, liu2018progressive, liu2018darts}.

\paragraph{Detailed Training Settings}
\label{details}
The network parameters are optimized using momentum SGD, with initial learning rate $\eta_{\bm{\theta}} = 0.1$, momentum 0.9, and weight decay $1 \times 10^{-4}$. The batch size employed is 16 and the initial number of channels is 16.

\subsection{Architecture evaluation on CIFAR-10}

\paragraph{Additional Enhancement Techniques}

Following existing works \citep{zoph2017learning, liu2018progressive, pham2018efficient, real2018regularized, liu2018darts}, we employ the following additional enhancements: cutout \citep{devries2017improved}.

\subsection{Architecture transferability evaluation on CIFAR-10}

\paragraph{Detailed Training Settings}

The network is trained with batch size 128, SGD optimizer with weight decay $3\times 10^{-4}$, momentum 0.9 and initial learning rate 0.1, which is decayed using cosine annealing.

\subsection{Cells for ${\lambda}_w^o = 0.007$}
\begin{figure}[]
  \centering
    \includegraphics[scale = 0.4]{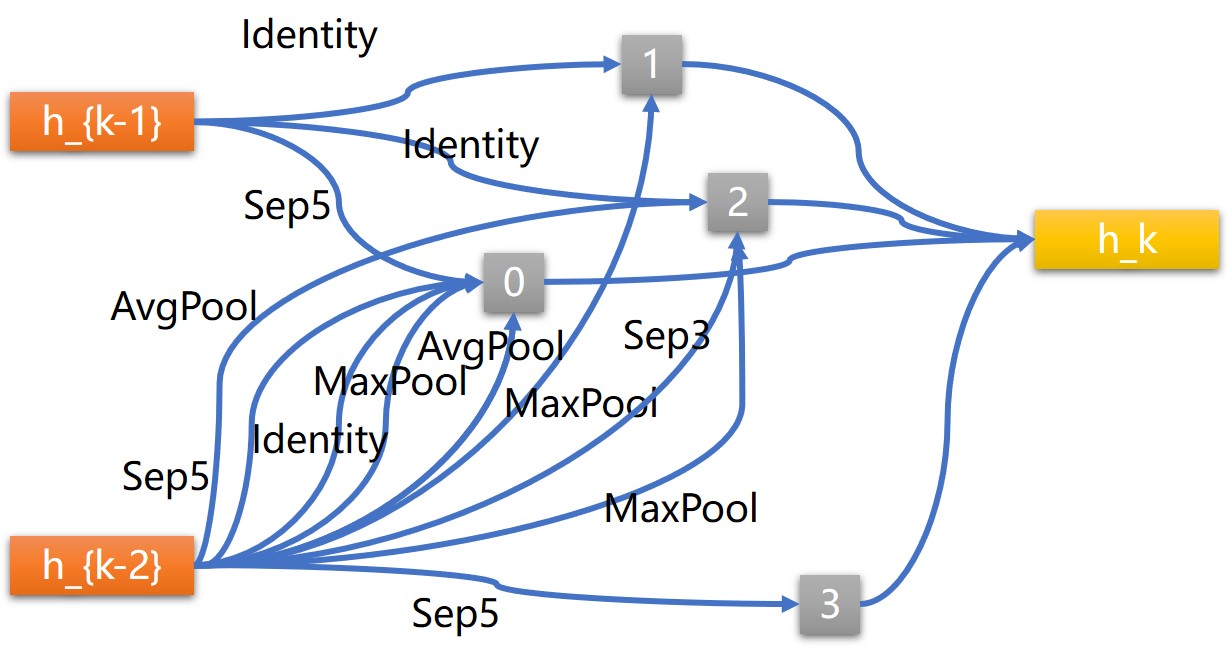}
  \caption{Normal cell found by BayesNAS with $\lambda_w^o = 0.007$.}
  \label{fig:cell}
\end{figure}

\begin{figure}[]
  \centering
    \includegraphics[scale = 0.4]{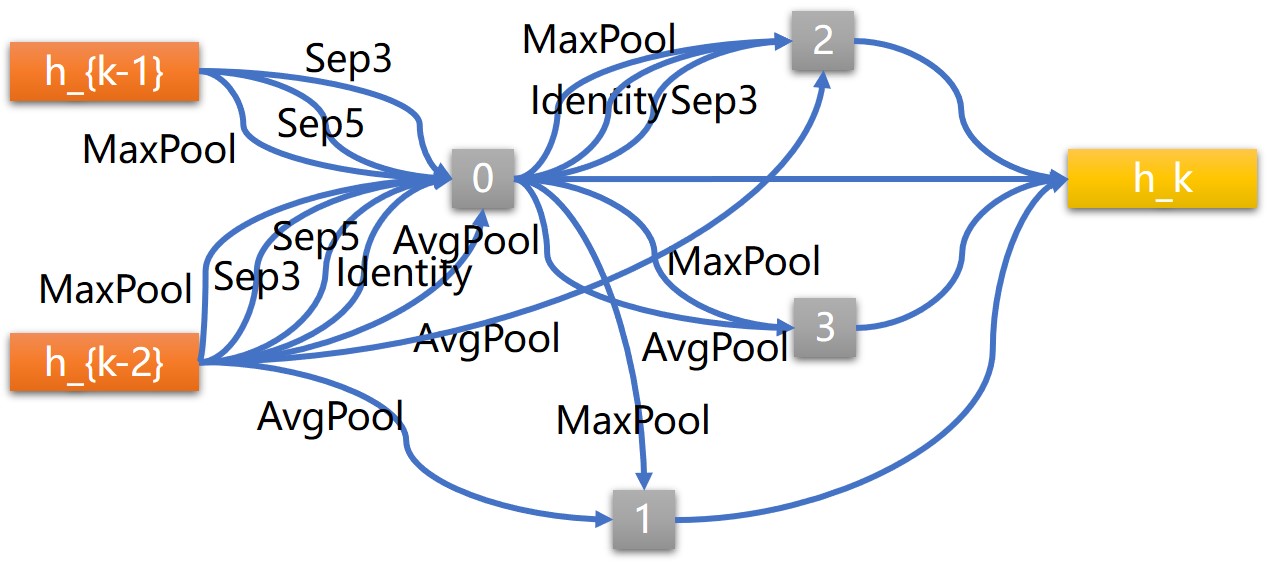}
  \caption{Reduction cell found by BayesNAS with $\lambda_w^o = 0.007$.}
  \label{fig:cell}
\end{figure}

\end{document}